\newcommand{\beginsupplement}{%
        \setcounter{table}{0}
        \renewcommand{\thetable}{S\arabic{table}}%
        \renewcommand{\theHtable}{S\arabic{figure}}
        \setcounter{figure}{0}
        \renewcommand{\thefigure}{S\arabic{figure}}%
        \renewcommand{\theHfigure}{S\arabic{figure}}
        \setcounter{section}{0}
        \renewcommand{\thesection}{S\arabic{section}}%
        \renewcommand{\theHsection}{S\arabic{section}}%
     }
\title{\textbf{Know Your Limits: Entropy Estimation Modeling for Compression and Generalization}}
\author{\small \href{https://orcid.org/0000-0003-1661-4579}{\color{black} \textbf{Benjamin L. Badger} \thanks{The authors would like to thank IBM for support during the writing of this paper. Code may be found on \url{https://github.com/blbadger/entropyestimators}}, \href{https://orcid.org/0000-0002-7028-4569}{\color{black} \hspace{1mm} \textbf{Matthew Neligeorge}}} \\
	\small{IBM} \\
	\small{\texttt{ben.badger@ibm.com}} \\
}
\date{}
\begin{document}
\maketitle

\begin{abstract}\normalsize{
    Language prediction is constrained by informational entropy intrinsic to language, such that there exists a limit to how accurate any language model can become and equivalently a lower bound to language compression. The most efficient language compression algorithms today are causal (next token prediction) large language models, but the use of these models to form accurate estimates of language entropy is currently computationally infeasible. We introduce encoder-augmented causal decoder model architectures that exhibit superior training efficiency characteristics and achieve higher compression than causal transformers even when trained on modest hardware. We demonstrate how entropy estimates can be obtained on a per-token basis, and show that the generalization of models trained to approach the entropy of their training data necessarily exceeds the generalization of models trained to minimize loss beyond this value. We show empirically that causal models trained to approach but not exceed estimated per-token entropies exhibit greater generalization than models trained without taking entropy into account.}
\end{abstract}

\section{Introduction}

    How good can a language model possibly be, and how can we use this knowledge to assist with training models? We approach this question from the perspective of entropy intrinsic to language, which can be thought of as the amount of uncertainty inherent in the prediction of any next element in a language encoding (characters, words, tokens, or otherwise). It is clear that language has non-zero entropy, as there for any given segment of text one has a certain amount of free will in completing most portions of that segment.

    Language prediction, entropy, and compression have been intimately connected since the early work by Shannon \citep{shannon1948mathematical} and Wiener \citep{wiener2019cybernetics}. In particular, Shannon's definition of informational entropy allows for the interconversion of prediction fidelity and compression value, originally defined in bits per character \citep{shannon1951prediction}. There are of course non-predictive compression algorithms; undercomplete autoencoders that contain a smaller embedding than input, rule-based approaches that use n-gram statistics to compress redundant characters, and generalized sequence compression approaches not tailored to language. The long-standing hypothesis that compression is analogous to intelligence (or at least generalizable model abilities) has been frequently supported over the last half century. 

    By far the most effective language compression algorithms today are large causal language models. For lossless compression of Wikipedia text, methods based on causal transformers trained on somewhat limited hardware (0.85 bits required per input byte, hereafter referred to as `BPB') have been shown to far outperform compression methods such as \texttt{gzip} (2.58 BPB) \citep{bellard2019lossless, mahoneylargetext} on that dataset. Even given extremely limited (CPU-only) inference constraints, deep learning approaches such as LSTMs have been shown to achieve the highest compression of all tested methods on similar datasets \citep{hutterprize, knollcmix}. As most machine learning model capabilities improve with increases in data and compute, it is unsurprising that training larger models on more tokens results in greater compression. For example, on a more difficult compression corpus of excerpts from the Pile \citep{gao2020pile}, the 671 billion parameter mixture-of-experts Deepseek V3 trained on 15 trillion tokens yields 0.55 BPB, and the dense 405 billion parameter Llama 3.1 achieves 0.54 BPB \citep{deepseekai2025deepseekv3technicalreport}. 

    This bears the question: is there an approach to language compression that is more efficient to train than a large causal transformer, but that can scale to make use of the same amount of compute as these models in order to reach a lower compression ratio? As compression is closely related to intrinsic entropy, a more efficient compression model would also be a more accurate entropy estimator. In this work we introduce an architecture that we show to be more efficient than causal transformers for compression and entropy estimation. We then show how this model may be used to calculate entropies of individual tokens, prove that training a model to exceed its training dataset's entropy results in worse generalization, and show how token-specific entropy may be used for superior generalization when training causal models.

\section{Our Contribution}

    The fundamental idea of the first portion of this paper may be summarized as follows:
    \vspace{0.15cm}

    \noindent{\textit{By combining an encoder with a causal decoder we can capture much of the intrinsic entropy of any sequence in that encoder, resulting in more efficient training for entropy estimation than exists using causal decoder-only architectures. 
    }}
    \vspace{0.15cm}

    \noindent{and the second portion as}
    
    \vspace{0.15cm}
    \noindent{\textit{Entropy estimates may be obtained on a per-token basis, and causal language models generalize better when trained using this information.
    }}
    \vspace{0.15cm}
    
    \noindent{In this work we introduce the following:}
    \vspace{0.15cm}

    \begin{enumerate}[nosep]
    \item Autoencoder architectures for language
    \item Embedding-augmented causal architectures for more efficient language compression
    \item A simple noise injection method for simulation of quantization-aware training
    \item Per-token entropy computation methods for non-causal models, and architectures to accurately guess these estimates
    \item A proof that training models using entropy estimates results in ideal generalization in a certain sense, and empirical evidence that entropy-informed training results in greater generalization.
    \end{enumerate}
    \vspace{0.15cm}
    We note that while our investigation is focused on language in the spirit of Shannon information theory, these techniques may be easily applied to vision, audio, or any other domain in which one wishes to model sequences of tokens.

\subsection{Related Work}
    
    We take much inspiration from the pioneering work of Shannon \citep{shannon1948mathematical}, and use the mathematical framework established there and by others \citep{wiener2019cybernetics} throughout this paper. An early attempt to estimate the per-character entropy of a segment of English text is found in \citep{shannon1951prediction}, where the author used a reformulation of cross-entropy applied to a person guessing letters in a sequence from a previously-unread text segment in order to estimate the entropy of that text segment. We employ a similar framework for calculating per-token entropy estimates, modified for use with causal and noncausal models.

    A contribution towards the re-introduction of compression as a language modeling metric for large datasets is found in \citep{gao2020pile800gbdatasetdiverse}. We employ a number of conventions popularized by that work, including the use of bits per input byte as a measure of compression and entropy estimation. \citep{delétang2024languagemodelingcompression} provides another view into language modeling in the framework of compression, and finds that these models also extend their compression abilities to vision and speech datasets, albeit to a more limited degree. 

    Wei and colleagues observed that near-lossless text compression can be achieved at a 1:10 ratio using transformer based encoder-decoder vision language models \citep{wei2025deepseekocrcontextsopticalcompression}.  The efficiency of training what we call a `global' encoder (where all input tokens are attended, rather than only previous tokens) in the vision-language setting is presented in the context of its benefits for compression, but the relationship between the use of a global encoder with training efficiency due to the entropy of the source is not explored in that work. We provide a perspective here that it is likely the global encoder paired with a decoder of that vision model, not the input modality per se, that is behind the impressive compression capabilities of that model.
    
    Consistent methods for introducing entropy for classification datasets are found in \citep{shalev2020neuraljointentropyestimation}. A notable departure here is that our methods are designed with essentially the opposite goal with respect to training properties; specifically we seek more efficient convergence with respect to compute applied when given an extremely large dataset, rather than optimal generalization when given a small dataset. Equivalently, we seek a model that fundamentally scales in superior ways to the causal models currently used, whose scaling is detailed in \citep{hoffmann2022trainingcomputeoptimallargelanguage}. We focus primarily on scaling in terms of tokens trained, but also investigate scaling with model size.

\section{Autoencoders for Language Compression} \label{autosection}

    A direct method of compression and equivalently entropy estimation is an autoencoder where the encoder's embedding is smaller than the input. For autoencoder $\theta$ trained to convergence, the model's estimate of the entropy of text corpus $x$ may be estimated as shown in Equation \ref{eq1}, where $|e|$ signifies the size of the embedding in bits as computed by multiplying the number of activations by the number of bits per embedding activation ($n_p*b_p$), $L_t$ is the length of the corpus in tokens, and $\Bbb L(O(x, \theta)_i, x_i)$ the cross-entropy between the autoencoder model output and sample $x_i$.

    \begin{equation}\
    H(x) = \frac{|e|}{L_t} * \ln(2) + H(O(x, \theta), x) = \frac{|e|}{L_t} * \ln(2) + \sum_i \Bbb L(O(x, \theta)_i, x_i) 
    \label{eq1}
    \end{equation}
    
    We first explore language model autoencoders using two architectures capable of causal language modeling: the transformer \citep{vaswani2023attentionneed} and the masked mixer \citep{badger2025maskedmixerslanguagegeneration}, which substitutes self-attention for masked convolutions (see Figure \ref{fig1} for the autoencoder architecture we employ for our experiments). We explore architectural variations that minimize the loss term in Equation \ref{eq1} given a limited amount of compute, and train on excerpts of the FineWeb-Edu dataset, a subset of the FineWeb dataset that was itself generated via filtering the Common Crawl and is representative of the large, diverse, and moderately curated datasets typical of frontier model training today \citep{penedo2024finewebdatasetsdecantingweb}. We also use a mathematical subset of the FineWeb, FineMath 4+ \citep{allal2025smollm2smolgoesbig}, as an example of a lower-entropy dataset as is typical of a mathematical and programming corpus. We optimize Cross-Entropy Loss objective functions using AdamW \citep{loshchilov2019decoupledweightdecayregularization} unless otherwise noted, and typically train for 200,000 steps for each training run. Throughout this work, we train models of between 75 and 250 million parameters on datasets of between 13 and 30 billion tokens. We match models of different architectures by approximate compute requirements, such that for any figure panel in this work all models exhibit approximately the same throughput and require approximately the same device memory for a fixed batch size.
    
    \subsection{Transformers Struggle to Autoencode using Repeat Embeddings}
    
    Prior work showed that masked mixer-based autoencoders are more efficient to train than transformers when the decoder receives repeated embeddings from the encoder, one per token position of the input \citep{badger2025maskedmixerslanguagegeneration}. We further characterize this difference in training efficiency for scaled models, and find it to be consistent regardless of model size (Figure \ref{figs1}). We hypothesized that the poor transformer training in this paradigm results from an inability of positional encodings to sufficiently differentiate repeated token embeddings, which would result in insufficient use of the embedding's information by the decoder (see Section \ref{sections1} for a more formal explanation of this argument).

    \begin{equation}
        E(x) = W(x_{m :m + s} \circ x_{0: \;\max (0, \; m + s - d_m)}) + \beta
        \label{eq6}
    \end{equation}
    
    Inspired by \citep{morris2023languagemodelinversion}, we tested this hypothesis by unrolling the encoder's embedding such that each decoder input is unique modulo some constant. For inputs of arbitrary token dimension given fixed $d_m$ size, given the reassigned index $m = n \mod d_m$ and embedding source sequence size $s$ we use Equation \ref{eq6}. We use $s = d_m /2$ for our experiments, and throughout this paper we signify the tensor $x$ indexed by $(x_i, x_{i+1}, x_{i+2}, ..., x_{j})$ as $x_{i: j}$ and denote concatenation of vectors $a, b$ as $a \circ b$. In support of our hypothesis, we find that replacing the decoder's input with unrolled projections of the encoder's embedding results in vastly improved transformer autoencoder training characteristics as shown in Figure \ref{fig1}.

    \begin{figure}[h]
        \centering
        \includegraphics[width=0.99\textwidth]{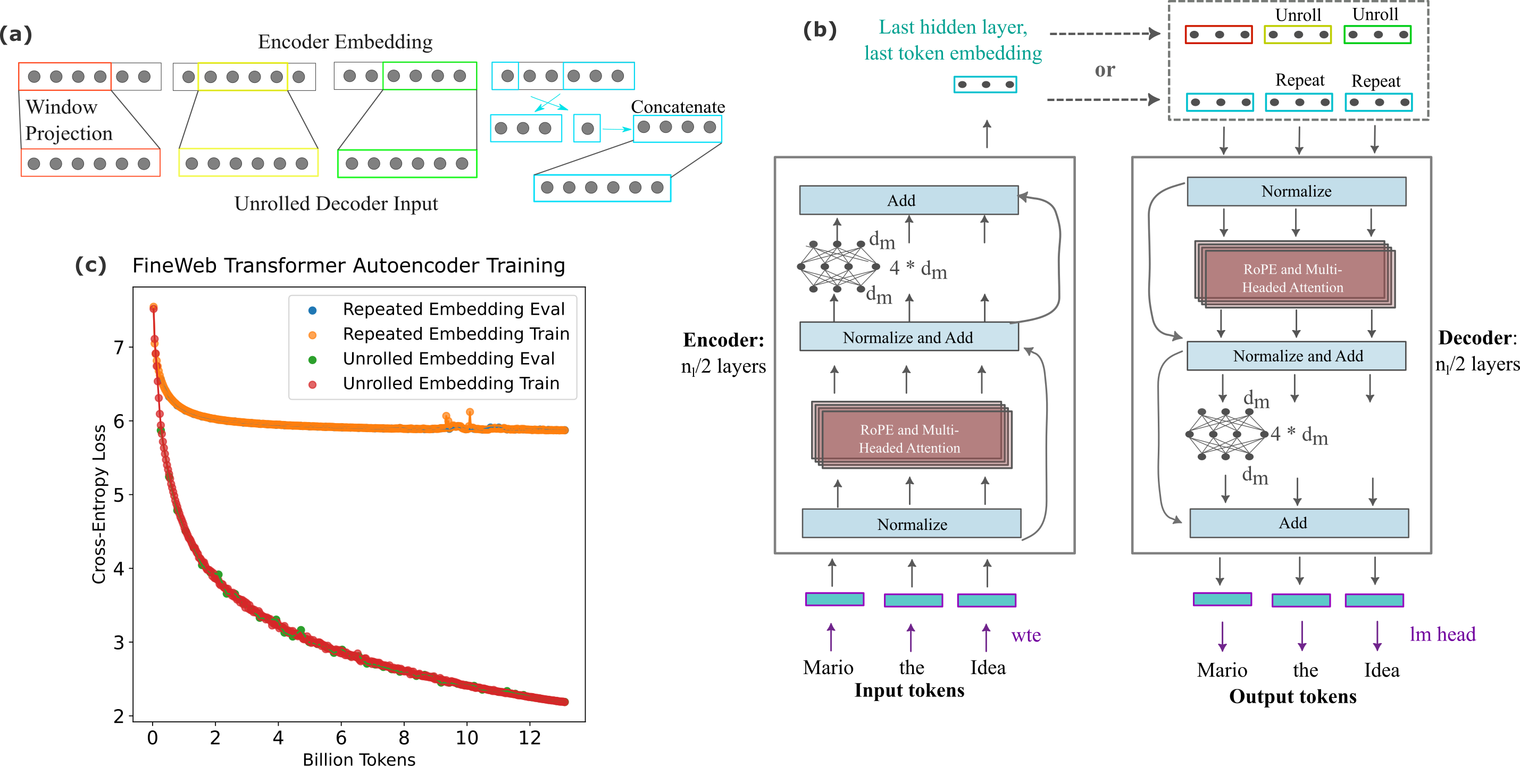}
        \caption{Transformer Autoencoders are poorly trainable with repeated but not unrolled embeddings. (a) Embedding unrolling method. (b) Experimental design and transformer autoencoder architecture. (c) Autoencoder training efficiencies on FineWeb-edu (number of layers, model dimension (width), and token context window $n_l=16, d_m=512, n_{ctx}=512$, respectively)}
        \label{fig1}
    \end{figure}

    \subsection{Autoencoder Architecture Optimizations}

    Optimizing transformer and mixer autoencoder architectures for training efficiency, we find that causal masking is necessary for stable training of mixer autoencoders (Figure \ref{figs3}), and that for mixers increasing the kernel size but not head number results in increases in autoencoder efficiency without sacrificing numerical stability using FP16/FP32 mixed precision training (Figure \ref{figs2}, otherwise BP16/FP32 precision is required).  We observe similar training efficiencies between identically sized mixer and transformer autoencoders after accounting for throughput differences (mixers contain far fewer activations per forward and backward pass and exhibit around 2x the throughput for a model of a given layer number and dimension). Interestingly there are much larger changes in training efficiency for autoencoders than for causal mixers with different head numbers (Figure \ref{figs4}). We observe that simply increasing the number of inter-token parameters or activations does not necessarily lead to increases in training efficiency and in some cases actually reduces this metric.
    
    We further explored mixers with convolutions with kernels $k>1$, which mixes both sequence and a limited number of hidden dimension components in a single transformation. We observe non-obvious relationships between kernel size and training efficiency: for repeat embedding introduction the optimal kernel size is identical to the optimal head size (Figure \ref{figs2} (b)) but for unrolled embeddings we observe an increase in per-step loss with an increase in kernel size (up to $k=16$ as shown in Figure \ref{figs2} (d), where throughput begins to slow considerably). 

    \subsection{Autoencoders are Inefficient to Train as Compression Models but exhibit Superior Scaling Characteristics compared to Causal Models}

    \begin{figure}[h]
        \centering
        \includegraphics[width=0.85\textwidth]{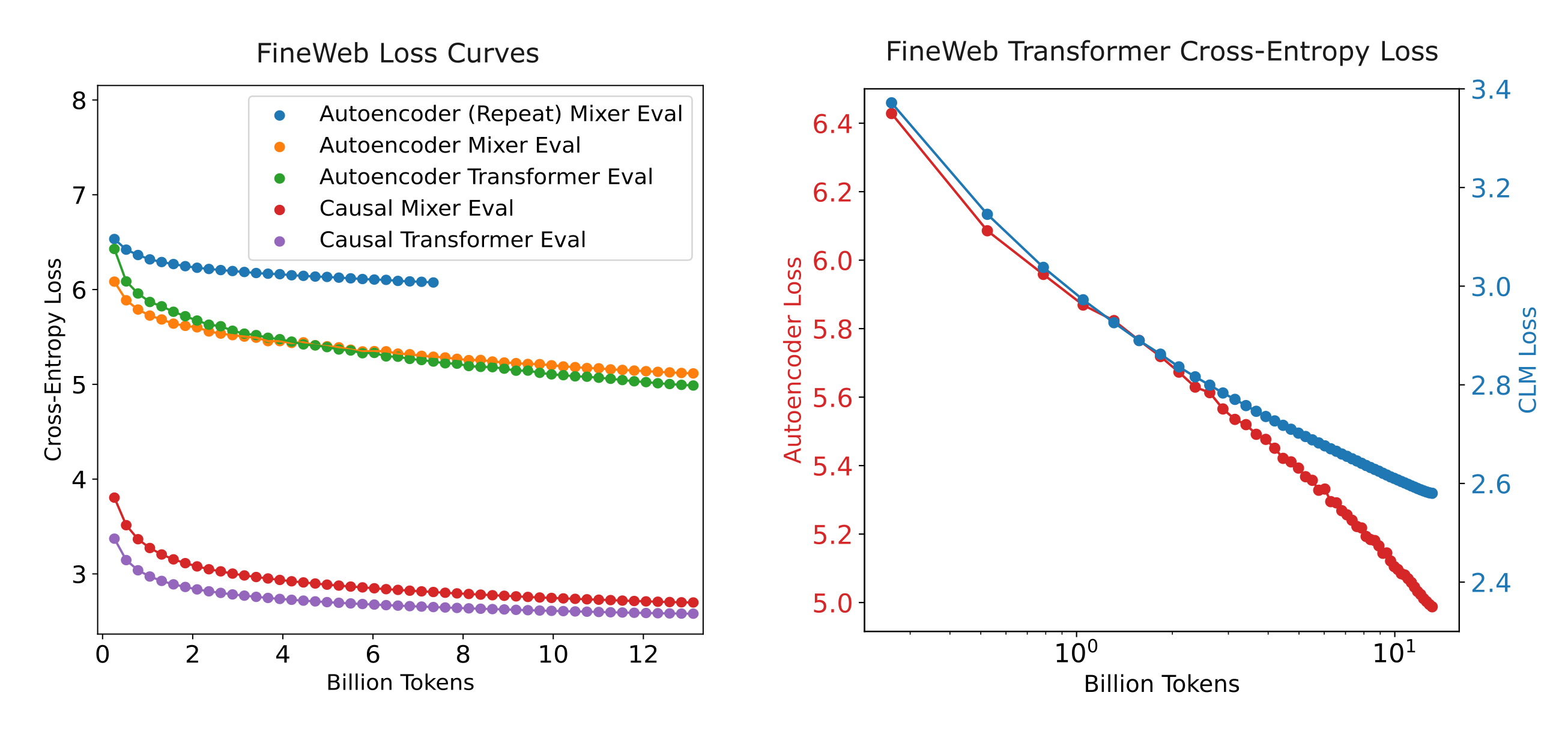}
        \caption{Compressive autoencoder and causal language model training characteristics on FineWeb. All autoencoders are $d_m=512, n_l=16, n_{ctx}=512$ with a compressed embedding of size $d_e=128$, causal models are compute-matched to the autoencoders.}
        \label{fig2}
    \end{figure}

    In the best case, the autoencoders introduced in the last section approach a compression of $n_{ctx}:1 = 512:1$ with respect to the size of all embeddings of input tokens compared to the embedding passed to the decoder. In those models this compression is effectively negated by the token embedding transformation, where a token representable in 16 bits is converted to an embedding requiring thousands of bits. We therefore explored the training efficiency of autoencoders where the encoder's embedding is subject to an undercomplete (compressive) linear transformation in order to increase the total model compression.

    We measure compression in Bits per Input Byte, BPB, after \citep{gao2020pile} by adapting their computation to cases where a fixed-size embedding is included for a fixed-size context window. For convenience, we restate the computation of a causal model's bits per byte in terms of the length of the text to compress in tokens $L_t$, the number of bytes the uncompressed text requires $L_b$, and the cross-entropy loss of a model in question with that text $\Bbb L$ in Equation \ref{eq30}.

    \begin{equation}
        \mathtt{BPB} = (L_t / L_b) \Bbb L  /\ln(2)
        \label{eq30}
    \end{equation}

    We make the simplifying assumption that our UTF-8 encoded text datasets require one byte per character (in reality 1 and 4 bytes are required) and calculate the amortized bits per byte required for an embedding of size $n_p$ with $b_p$ bits per activation used to represent that embedding, a context window of size $n_{ctx}$ and $L_b / L_t$ bytes per input token as shown in Equation \ref{eq31}. Solving Equation \ref{eq30} in terms of cross entropy loss, we can amortize the embedding's information in terms of cross-entropy loss over all predicted tokens as shown in Equation \ref{eq32}.

    \begin{equation}
         \mathtt{BPB_a} = \frac{n_p * b_p}{n_{ctx} * (L_b / L_t)}
         \label{eq31}
    \end{equation}

    \begin{equation}
        \begin{split}
         \Bbb L_a &= (\mathtt{BPB_a} * \ln(2))/(L_t / L_b) \\
         \Bbb L &= \Bbb L(O(x, \theta), x) + \Bbb L_a
        \end{split}
    \label{eq32}
    \end{equation}

    For the tokenizer used throughout this paper (with size 8k), we find a $L_b / L_t$ of 3.92 for FineWeb and 2.82 for FineMath. When we compute the total loss of our lowest-prediction-loss autoencoder in the last section ($n_{ctx}=512, \; d_e = 1024, \; \Bbb L=0.3$ assuming 4 bits per parameter, we find a BPB of 2.04 which corresponds to normalized loss of $\Bbb L = \Bbb L + \Bbb L_a =  0.3 + 5.54$ which is far greater than the corresponding causal language model loss give the same compute ($\Bbb L = 2.6$). Thus we investigate the training efficiency of autoencoders with greater compression in the embedding.
    
    We find that both transformer and mixer autoencoders are relatively inefficient to train even with a relatively mild compression (encoder to embedding to decoder hidden sizes of $d_e=512 \to d_c = 128 \to d_d=512$ via linear transformations) and exhibit high cross-entropy loss relative to uncompressed autoencoders (Figure \ref{fig2} left) even without taking into account the amortized loss (0.51 BPB, or $\Bbb L_a = 1.386$ assuming 8 bits per embedding activation). At the same time, we observe superior asymptotic training efficiency characteristics in terms of loss change per billion tokens seen for the autoencoder (Figure \ref{fig2} right) such that with more compute and data it is likely that these autoencoder architectures would yield significant compression values.
    
\section{Entropy Estimation Model Architecture and Optimization}

    Causal language models can be used for lossless compression as follows: one takes a trained model, inferences it on a text corpus, and determines the bits to correct any errors in the model's predictions for each next element as they appear in a text segment, and saves those bits as the compressed segment. The compression value is the total number of bits per sequence byte, or in terms of entropy as given in Equation \ref{eq2}, where $x'$ denotes a right shifted $x$.

    \begin{equation}\
    H(x) = H(O(x, \theta), x') = \sum_i \Bbb L(O(x_{:i}, \theta)_i, x_{i+1}) \\
    \label{eq2}
    \end{equation}

    Comparing the total compression achieved by the autoencoders introduced in the last section to causal models, we find that the latter are vastly more efficient with respect to entropy reached on a certain corpus. We reasoned that this was largely the consequence of two differences in how the decoder processes input information: first, the decoder is responsible for generating only one token per forward pass in the causal model case instead of all tokens in one forward pass as occurs in the autoencoder, and secondly the decoder receives all previous token information exactly in the causal case but only a (compressed) representation of the entire input in the undercomplete autoencoder.

    We conjectured that combining an encoder with a causal decoder would ameliorate both of these challenges. This encoder would take information from the entire input and in that way be globally attentive, such that the entire encoder-decoder model is no longer causal and cannot be used for auto-regressive token generation. Entropy values per sequence are identical to those for autoencoders without causal decoders (Equation \ref{eq1}). We call this architecture the Entropy Estimation Model after its primary application. We do not explore the use of this architecture for non-autoregressive generation, but note that it would be amenable to masked language modeling and denoising diffusion-based generation, among other approaches.

    \subsection{Embedding Introduction and Training Efficiency}

    In one sense the goal behind the entropy estimation model architecture is to be able to substantially shrink the encoder's embedding size relative to what is necessary for an autoencoder, while exceeding the training efficiencies of causal models due to the next token information present in the embedding. Because of the importance of this embedding, we investigated three approaches to the introduction of the embedding to the causal decoder: token concatenation, embedding concatenation, and embedding projection. We find that mixers are somewhat more efficient to train when embedding concatenation is used but that transformers exhibit minimal training efficiency differences between these introduction choices (Figure \ref{fig3}). As such, we use embedding concatenation for masked mixers and token concatenation for transformers in other figures of this work.

    \begin{figure}[h]
        \centering
        \includegraphics[width=0.86\textwidth]{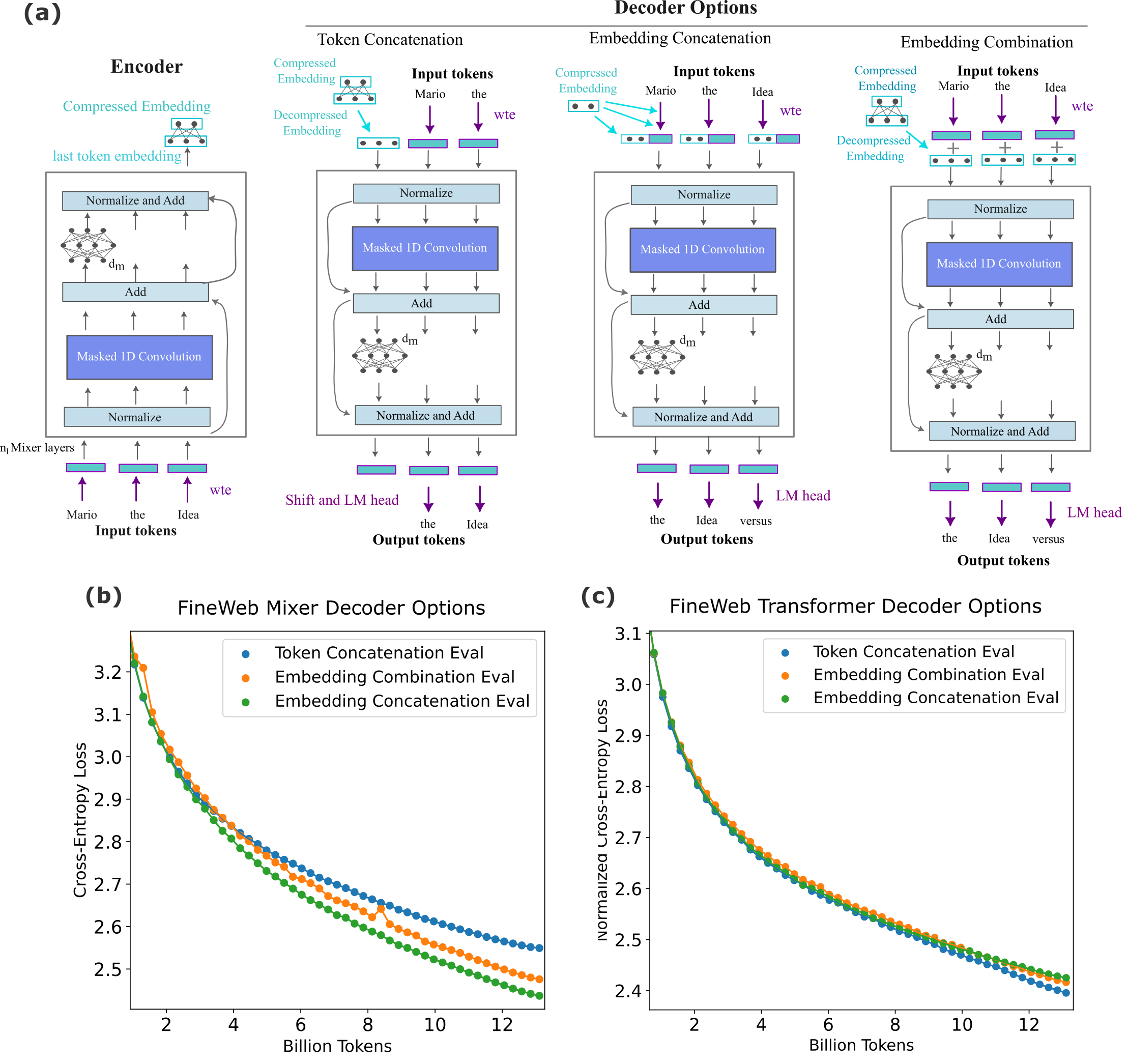}
        \caption{Entropy Estimation model embedding introduction methods and training efficiency.}
        \label{fig3}
    \end{figure}

    A representation of the prediction of one token when using token concatenation is given in Equation \ref{eq5}, where $O_e$ signifies the output of the encoder and $O_d$ is the output of the decoder and $E(x_n)$ the output of the model at token index $n$ and $a \circ b$ denotes the concatenation of vectors $a, b$.

    \begin{equation}
        E(x_{n}) = O_d \left( O_e(x, \theta_e) \circ x_{:n-1}, \theta_d \right)
    \label{eq5}
    \end{equation}

    For all embedding-augmented causal language models, we use a smaller encoder, specifically $d_{enc} = d_{dec} /2$ for transformers and $d_{enc} = d_{dec} / 4$ for mixers such that there is a relatively small throughput penalty for the introduction of this encoder, which on our real-world training runs was typically between 10 and 20 \% depending on the model size and GPU architecture used. We ignore this small constant factor when comparing training efficiencies.
    
\section{Entropy Estimation Models are More Efficiently Trainable than Causal Models} \label{eemsection}

    Causal language model training may be considered a difficult task: we seek to train a model to predict each next token accurately regardless of whether or not that token is actually predictable. We hypothesize that both token prediction as well as predicting which tokens are themselves predictable are difficult tasks, and that disentangling the question of how predictable a token is with the actual prediction of each next token would result in more effective language modeling. One way to accomplish this is to form a (compressed) embedding over the entire input (which we call a `global' encoding to emphasize that it is not causal) and incorporate this into a causal decoder, with the goal of capturing predictability as well as identifying unpredictable tokens in the embedding and the predictable tokens in the decoder. 
    
    From an architecture perspective, swapping out an autoencoder's non-causal decoder for a causal decoder gives two significant benefits: for each sample of size $n_{ctx}$ tokens, the model now only is tasked with predicting one token per forward pass rather than all $n_{ctx}$ tokens at once. Furthermore, tokens $t_0, t_1, ..., t_{n-1}$ are directly supplied to the decoder in addition to an embedding, such that the decoder no longer has to extract information for all these tokens from the embedding.

    \subsection{Entropy Estimation Model Training Characteristics}

    Investigating the size of model required in order to make effective use of a fixed-size global embedding ($d_e=64, n_{ctx}=1024$ unless otherwise noted), we find that there is a large increase in training efficiency up to a certain model size and then a plateau relative to a similarly sized causal model but that the relative increase in training efficiency itself with sample number grows without bound in our experiments (Figure \ref{figs5}). This notably does not imply that a very large entropy estimation model would be inefficient, merely that one cannot expect further proportional increases in per-step loss of that model to a similarly sized causal one. We conjecture that this plateau likely results from inefficiencies in the compression projections rather than the transformer or mixer modules themselves, but leave the investigation of this question to future work. We do not observe increases in entropy estimation model training efficiency upon increases of $k$ or head number, either in the encoder alone or for the entire model (Figure \ref{figs6}).

    We find that both mixer-based and transformer-based entropy estimation models exhibit better training efficiency than their respective causal counterparts, and do indeed exhibit the superior sample scaling properties previously observed in autoencoders, as well as an exponential increase in the loss gap between causal model and entropy estimation model compared to the loss achieved by the causal model (Figures \ref{fig4} and \ref{figs5}). Assuming that these findings translate to larger models trained with more compute and data, this implies that entropy estimation models allow for more feasible estimation of language entropy than is possible using causal models. 

    \begin{figure}[h]
        \centering
        \includegraphics[width=0.99\textwidth]{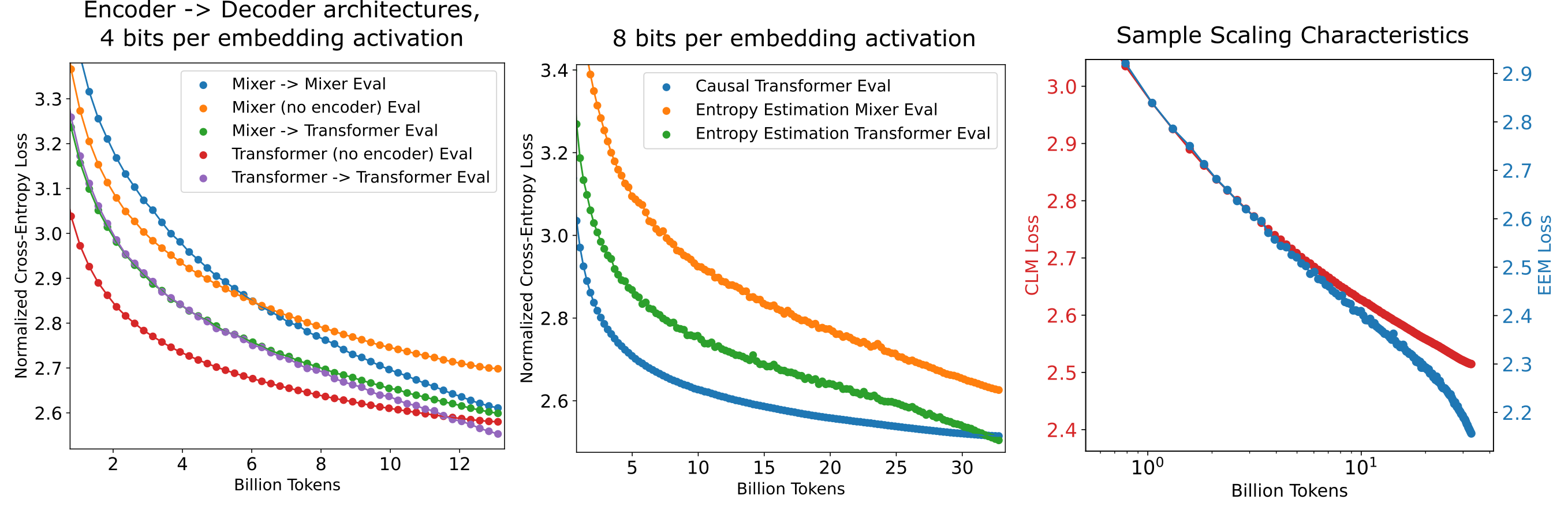}
        \caption{Entropy estimation model compression scaling with tokens trained. Normalization performed according to Equation \ref{eq32}.}
        \label{fig4}
    \end{figure}

    \subsection{Full-context loss scaling}
    
    It should be noted that we train on single documents per sample such that not all samples contain the full context window of nonpad tokens. Evaluating entropy estimation and causal language models on full-context samples only shows that both models perform somewhat worse on these longer prompts but that this difference is nearly constant across training scales as shown in Table \ref{table1}. From these results, we can predict that the entropy estimation model will surpass the causal language model's normalized loss for eight bits per parameter (which equates to $\Bbb L_a = 0.347$ for this setting) at around 700k steps (46 billion tokens) and with less confidence say that least 300 billion tokens would be necessary to reduce the loss of this entropy estimation model to near zero.

        \begin{center}
    \begin{table}[H]
    \begin{center}
    \renewcommand{\arraystretch}{1.2}
    \begin{tabular}{||l c c c c c||} 
     \hline
      Dataset & EEM (13.1 BT) & EEM (32.8 BT) & CLM (13.1 BT) & CLM (32.8 BT) &\\ 
     \hline
      All Samples & 2.379 & 2.157 & 2.580 & 2.515 & \\
     \hline
      Full Context & 2.502 & 2.297 & 2.612 & 2.549 &\\ 
     \hline 
    \end{tabular}
    \end{center}
    \vspace{0.1cm}
    \caption{Losses per transformer-based entropy estimation model or causal language model with the given training samples (in billions of tokens).}
    \label{table1}
    \end{table}
    \end{center}

\section{Noise Injection for Entropy Estimation Model Quantization-Aware Training}

    In the last section it was assumed that four or eight bits per embedding activation would be sufficient, but those models were trained using FP16/FP32 mixed precision \citep{micikevicius2018mixedprecisiontraining}. We therefore investigated whether or not such an assumption was valid, and note that for unmodified models this is not strictly the case: there are increases in evaluation loss when the embedding's activations are quantized via casting to 8-bit types, and substantially less increase in loss when using a more sophisticated data-aware quantization \citep{dettmers2022llmint88bitmatrixmultiplication} as shown in Table \ref{table2}. By measuring the sensitivity of each layer in the entropy estimation model to quantization, we find that the compressed embedding is most sensitive to activation quantization, which is expected given the information passing through this layer (Figure \ref{fig5}).

    \begin{center}
    \begin{table}[H]
    \begin{center}
    \renewcommand{\arraystretch}{1.2}
    \begin{tabular}{||l c c c c c||} 
     \hline
      Model & Float16 (E5M10) & BNB Int8() & Float8 (E4M3fn) & Float8 (E5M2) &\\ 
     \hline
      FP16/FP32 Trained & 2.402 & 2.441 & 2.982 & 4.058 & \\
     \hline
      QAT, $2^{-4}$ noise in embedding & 2.424 & 2.437 & 2.672 & 3.253 &\\ 
     \hline
      QAT, $2^{-2}$ noise pre-embedding & 2.420 & 2.423 & 2.469 & 2.607 &\\
     \hline 
    \end{tabular}
    \end{center}
    \vspace{0.1cm}
    \caption{Evaluation Cross-Entropy Losses for models with the denoted training methods, when quantization is applied to the compressed embedding.}
    \label{table2}
    \end{table}
    \end{center}

    Observing the distribution of activations in the compressed embedding (Figure \ref{figs7} (a)), we hypothesized that using quantization-aware training would allow for near-lossless post-training quantization to 8 bits. We sought a quantization-aware training method that was both simple to implement and hardware-agnostic, as some models were trained on hardware that does not support native 8-bit floats recently introduced in \citep{micikevicius2022fp8formatsdeeplearning}. Inspired by studies estimating the number of bits required per parameter by injecting noise into weights after training and observing the resulting decrease in model accuracy \citep{rumelhart1986parallel, sejnowski1987parallel}, we designed a quantization-aware training (QAT) procedure by which uniform noise is injected into the compressed embedding layer's activations according to Equation (\ref{eq7}), where we range $q=2^{-4}$ to $q=2^{-1}$.
    
    \begin{equation}
    O_{up} = W_{up} \left( W_{down}x + \mathcal{U}_e(-q, q) \right)
    \label{eq7}
    \end{equation}

    \subsection{Noise Injection leads to Activation and Weight Quantization Insensitivity}

    After training models using noise injection, we observe the loss on hold-out evaluation data when the embedding is quantized to eight bits per activation either using naive bit casting to one of the formats introduced in \citep{micikevicius2022fp8formatsdeeplearning} or else a much more sophisticated approach of performing data-aware quantization introduced in \citep{dettmers2022llmint88bitmatrixmultiplication}. The latter method involves quantization of weight matrices as well as activations of a particular layer, so we apply this quantization to an identity transformation introduced into the compressed layer in order to separate the effects of weight and activation quantizations. We find that training with injection of noise in the compressed embedding layer results in a small increase in evaluation loss, but that this imparts a decrease in loss upon quantization compared to models trained without noise injection (Table \ref{table2}). We find that injecting noise before the compressed embedding results in the smallest increase in quantized versus non-quantized loss (Table \ref{table2}).

    \begin{figure}[h]
        \centering
        \includegraphics[width=0.99\textwidth]{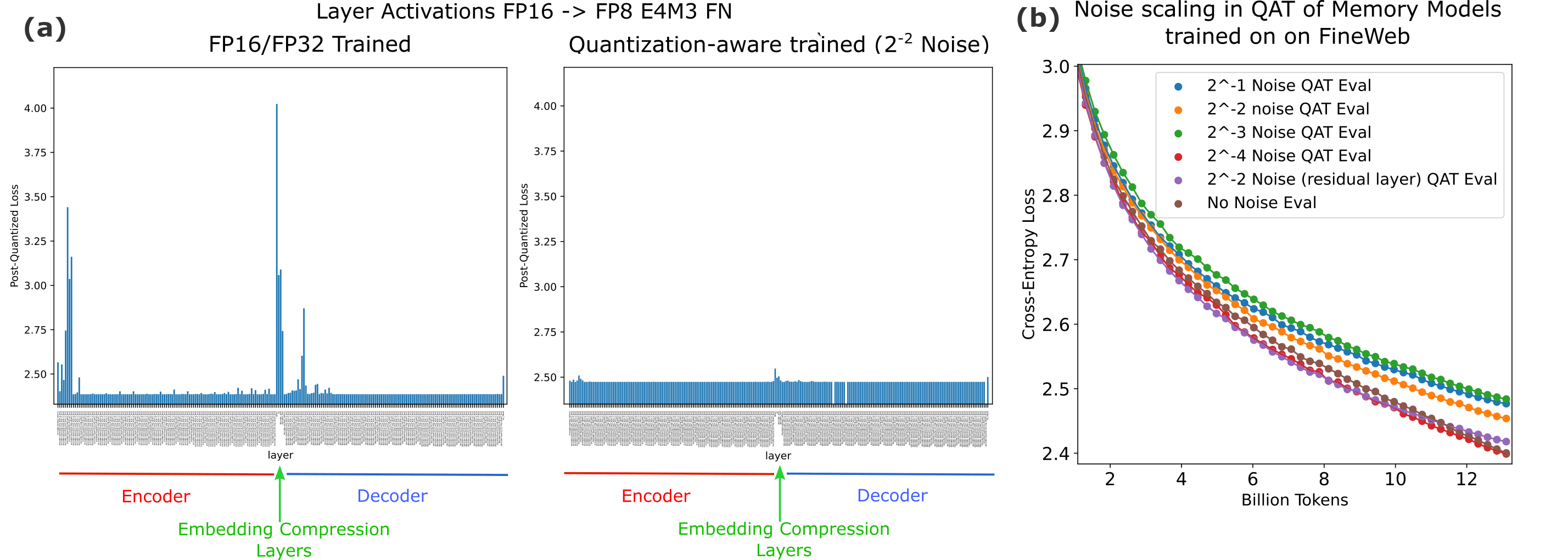}
        \caption{Quantization-aware training via noise injection imparts quantization insensitivity to all layer activations and results in a minimal decrease in training efficiency.}
        \label{fig5}
    \end{figure}

    Quantization-aware training is not particularly useful for entropy estimation models if it results in significant decreases in training efficiency relative to unquantized models, as the primary benefit of this type of model compared to causal models is its training efficiency. We characterize this in Figure \ref{fig5}, and find that there is very little no decrease in training efficiency when models are trained with noise necessary for near-lossless 8-bit embedding compression.
    
\section{Per-Token Conditional Entropy Estimation}

    The superior training properties of entropy estimation models relative to causal models allows for more efficient estimation of source entropy. At the level of the text corpus, this allows one to more accurately predict how much compute should be applied to a given corpus during causal model training: if a model were to approach the entropy estimation amount, further training would not be expected to yield much generalizable benefit. Besides an estimation of entropy in a corpus of text or other sequential data, we can also use entropy estimation models to obtain entropy estimates for individual tokens. In this section we detail methods for computing per-token entropy using encoder-decoder entropy estimation models, and show that fast proxy methods for doing so are poor approximators, but that models may be trained to become excellent approximators of these values.

    \subsection{Token Entropy Calculation}

    Obtaining entropy estimates from a causal model requires only the observation of unreduced loss, where the model's entropy estimation for token $n$ is simply the unreduced Cross-Entropy loss between the model's output given previous tokens $O(t_{:n-1}, \theta)$ and that token. As the entropy estimation model's encoder is globally attentive, we cannot perform a similar decomposition in the sequence dimension for these models. Conceptually this is because although one may similarly observe the unreduced loss values at each token position given an entropy estimation model, these are unreliable estimates of token entropy as the encoder's embedding contributes an unknown amount of information to the prediction of this token. Decomposing the amount of information per token is a difficult task as the embedding's relationship to each model output is highly nonlinear.

    Per-token entropy estimations may instead be obtained as follows: given two encoder-decoder entropy estimation models $\theta_1, \theta_2$ where $\theta_1$ has a context window of size $N-1$ and $\theta_2$ has a context window of size $N$, and for simplicity we assume that the embeddings of these models (in bytes) are the same size, $\vert e_1 \vert = \vert e_2 \vert$ although this is certainly not a necessary condition. We can then compute the entropy of the token at position $N+1$ given the tokens at position $N$ using these models by applying the chain rule of entropy as shown in Equation \ref{eq3}, where the token entropy at each position may be obtained by a sliding window approach as shown in Figure \ref{fig6}. Note that to differentiate cross-entropy from joint entropy, we denote cross-entropy of $a, b$ as $H_b(a)$.

    \begin{equation}
        \begin{split}
       H(t_{N} \vert t_{0}, t_{1}, ..., t_{N-1}) & = H(t_0, t_1, ..., t_N) - H(t_0, t_1, ..., t_{N-1}) \\
        & = H_{(t_1, t_2, ..., t_{N})} O((t_0, t_1, ..., t_{N-1},  \theta_2))  - H_{(t_1, t_2, ..., t_{N-1})} O((t_0, t_1, ..., t_{N-2}, \theta_1) \\
        & = \frac{\vert e \vert}{L_t} \ln (2) + \sum_{i=0}^N \Bbb L(O(t_{0}, t_{1}, ..., t_{i-1}, \theta_2), t_i) - \left( \frac{\vert e \vert}{L_t} \ln (2)  + \sum_{i=0}^{N-1}\Bbb L(O(t_{0}, t_{1}, ..., t_{i-1}, \theta_1), t_i) \right) \\
        & = \sum_{i=0}^N \Bbb L(O(t_{0}, t_{1}, ..., t_{i-1}, \theta_2), t_i) -  \sum_{i=0}^{N-1}\Bbb L(O(t_{0}, t_{1}, ..., t_{i-1}, \theta_1), t_i)
        \end{split}
    \label{eq3}
    \end{equation}

    For efficiency at the expense of a certain amount of accuracy, one model can be used instead of two, where a pad token is inserted into to the $t_{-1}$ index and Equation \ref{eq4} is applied.  For this to be an accurate estimate, it is assumed that the model is trained on data that is left padding and that the encoder and encoder were trained on these padded inputs. We find that per-token entropy estimates using this method correlate reasonably well with per-token entropy estimates obtained from a causal language model that achieves nearly equivalent total compression (Figure \ref{fig6}). 

    \begin{equation}
    H(t_{N} \vert t_{:N-1}) = H_{t_{1:N}} (O(t_{0:N-1}, \theta)) - H_{ t_{0: N-1}}\left( O(t_{-1:N-2}, \theta) \right)
    \label{eq4}
    \end{equation}

    \begin{figure}[h]
        \centering
        \includegraphics[width=0.99\textwidth]{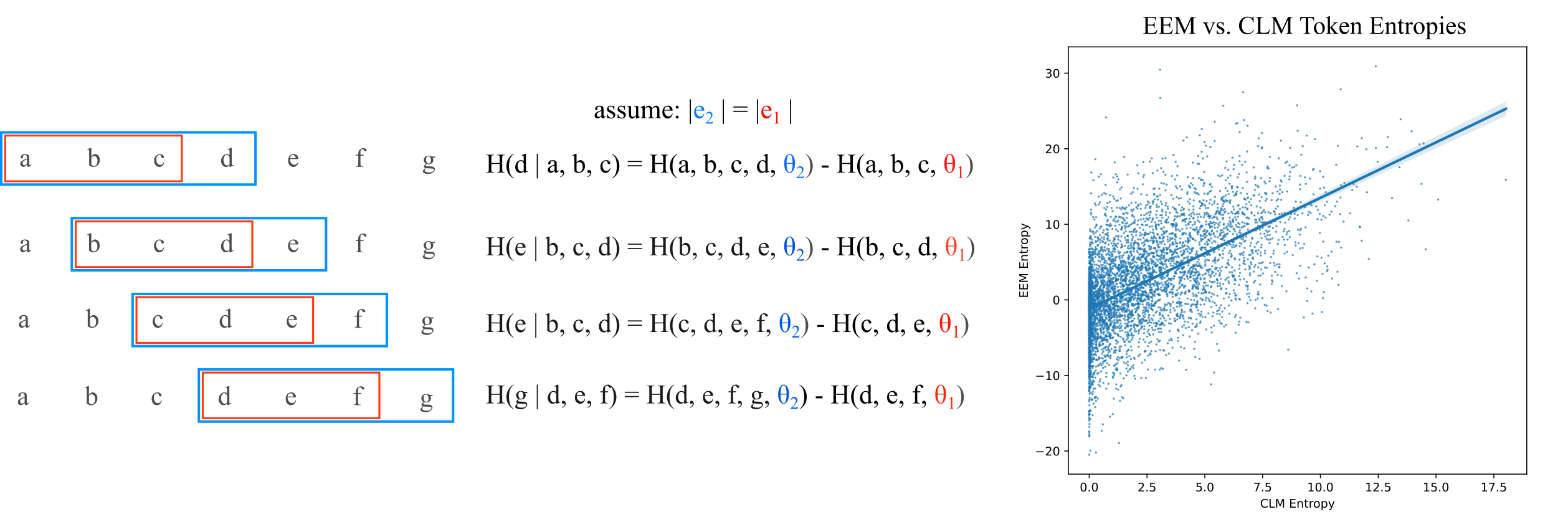}
        \caption{Two-model per-token entropy calculation (left) and single-model entropy estimation correlation with causal language model cross-entropies per token (right, $y = 1.469x - 1.1879$, $R^2=0.329$).}
        \label{fig6}
    \end{figure}

    \subsection{Fast Estimation Methods don't Correlate with Causal Entropy Estimations}

    The use of Equation \ref{eq3} or Equation \ref{eq4} with an entropy estimation model give accurate (assuming the stated conditions are met) but slow estimates relative to those obtained by a causal language model: only one token's entropy estimate is obtained per forward pass, compared to the $n_{ctx}$ token estimates obtained in one forward pass of a causal model. We therefore investigated methods to approximate entropy estimates that could be performed such that all token's entropies are calculated in one or two forward passes. We approximate entropy estimation models' outputs by observing the change in the model output upon occlusion of the embedding as detailed in Section \ref{proxysection}, but find that this does not result in accurate entropy estimates (Supplementary Table \ref{tables1}). Occlusion-based methods yield inconsistent average per-index token entropies per index relative to causal model entropy estimates as well (Supplementary Figure \ref{figs8}).

    \subsection{Second Order Entropy Estimation Models}

    We conjectured that a more direct, trainable method for estimating token entropies exists where a model $\theta_a$ could learn to predict conditional entropies that were first computed by another model $\theta_b$. We call $\theta_a$ a `second order' entropy estimation model because its input target labels are themselves the outputs of another model. A second order entropy estimation model can be implemented using language modeling architectures with a regression head replacing a language modeling (token prediction) one. We use causal backbones for entropy estimation to match token prediction modeling priors. 
    
    Reasoning that it would be easier for a model to predict the uncertainty of a token that it knows compared to the uncertainty of a token that it does not, we investigated the prediction of entropies in datasets where the output is shifted in the token dimension (similarly to what is done for causal language modeling) as well as when the output is not shifted. Empirically confirming that this is indeed the case, we find that per-token entropy estimates may be learned with relatively high precision (Figure \ref{fig8}).

    \begin{figure}[h]
        \centering
        \includegraphics[width=0.99\textwidth]{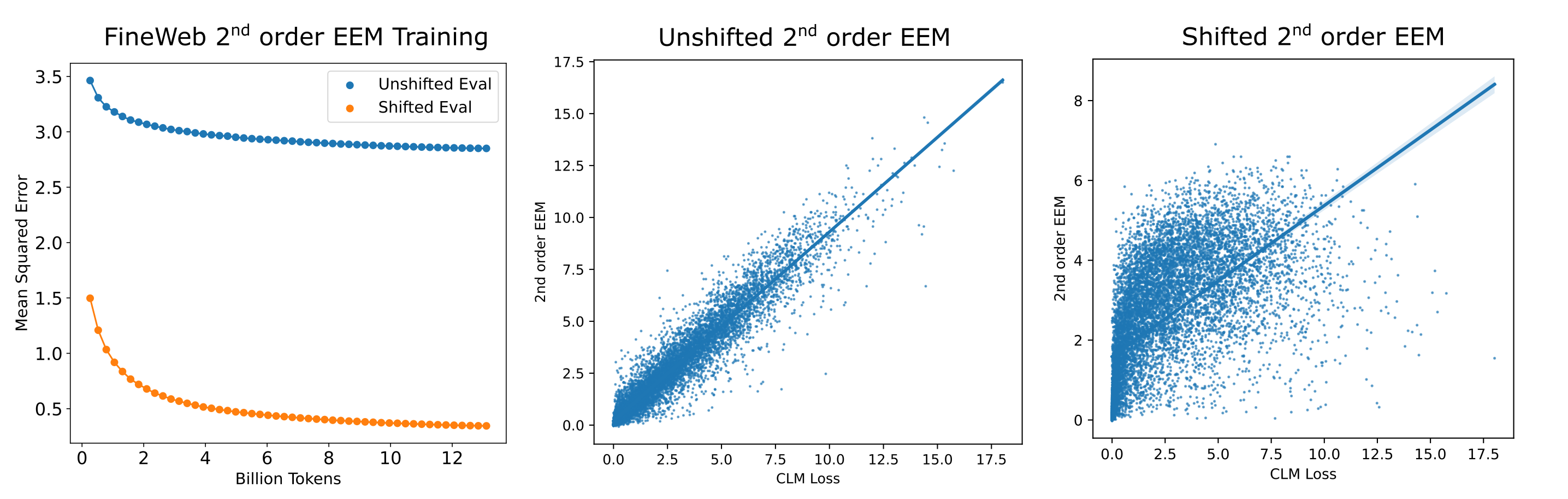}
        \caption{Accurate entropy estimation in a token-shifted versus unshifted second-order transformer entropy estimation model. For the unshifted model the OLS equation is $y = 0.911x + 0.198$ with an $R^2=0.919$, for the shifted model $y = 0.378x + 1.588$ with $R^2=0.369$.}
        \label{fig8}
    \end{figure}

    The substantial difference between second order EEM training efficiencies when the identity of the next token is known (unshifted) and when it is not known (shifted) supports our initial hypothesis stated in Section \ref{eemsection}, that causally predicting which tokens are and are not predictable is in itself a difficult task even when we ignore the task of actual token prediction. Causal models of course do not have the identity of each next token available during training, and therefore we can expect their ability to estimate the predictability of each token to be at least as inaccurate as the shifted second-order EEM (as causal models must also learn to predict tokens at the same time). Entropy estimation model decoders, on the other hand, are given at least an approximation of the identity of each next token by the compressed embedding and therefore can be trained in a somewhat similar manner to the unshifted second-order entropy estimation model, which is clearly more efficient.
    
\section{Causal Models Generalize Better when Trained with Entropy Estimations}

    In some sense it is clear that distillation of an entropy estimation model must occur via some introduction of the per-token entropies into the training of another model.  The natural question to ask therefore is whether this per-token entropy information is beneficial in some way for causal language modeling. Although there are undoubtedly other uses, we focus here on the relationship between this information and model generalization.

    \subsection{Entropy-informed Training yields Ideal Generalization}
    
    We first seek to determine an answer to the question of whether or not training a model to minimize cross-entropy below the the training dataset's entropy is detrimental to generalization.  Another way of saying this is as follows: training a model past its dataset's entropy may either result in an increase, no change, or a decrease in the hold-out evaluation dataset's entropy.  There is intuition for both `no change' and an `increase' scenarios: on one hand if a model learns to approximate its training data beyond what is possible considering that data's entropy, the model must by definition not generalize to hold-out test data or the dataset's entropy would be the new lower estimate provided by the model. Thus anything learned by a model to decrease its loss beyond the dataset's intrinsic entropy should in principle result a decrease in generalization. At the same time, it seems reasonable to expect that there is some function capable of minimizing the objective function on the training data without affecting the hold-out test set.

    We claim that training a model to minimize its cross-entropy loss with training data towards but not beyond the entropy of that data results in the ideal generalization, defining generalization as the minimum value of the cross-entropy loss of the model on the hold-out test data. We formalize this statement and prove it in Theorem 1, as it follows directly from Gibbs' inequality and the independence and identical distribution of training and test data. This effectively rules out the possibility that training a model's loss below the entropy of the training data does not affect test set performance. 

    \topsep=12pt
    \newtheorem{theorem}{Theorem}

    \begin{theorem}
        Generalization decreases when models are trained to minimize their losses below the entropy of the training dataset.
    \end{theorem}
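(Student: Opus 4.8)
The plan is to reduce the statement to Gibbs' inequality applied to the test distribution, exploiting that training and test data share the same underlying source $p$. First I would fix the population next-token distribution $p(\cdot\mid x)$ conditioned on context $x$ and write the source entropy as $H = \mathbb{E}_{x}\!\left[H(p(\cdot\mid x))\right]$, the quantity the entropy estimation model approximates. For any model distribution $q$, the expected per-token loss on a draw from $p$ decomposes as
\begin{equation}
\mathbb{E}_{x,y\sim p}\!\left[-\ln q(y\mid x)\right] = H + \mathbb{E}_{x}\!\left[D_{\mathrm{KL}}\!\left(p(\cdot\mid x)\,\|\,q(\cdot\mid x)\right)\right] \ge H,
\label{eqgibbs}
\end{equation}
with equality if and only if $q = p$ almost everywhere. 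This identifies $H$ as the minimum attainable cross-entropy and $q = p$ as its unique minimizer.

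Next I would invoke the i.i.d. assumption. Because the hold-out set is an independent draw from the same $p$ and the trained model $q$ is fixed once the training set is chosen, the expected test loss, which is our definition of generalization, is exactly the left-hand side of \eqref{eqgibbs} evaluated at $q$. Hence the best possible generalization is $H$, attained precisely when the model recovers $p$; and a model that recovers $p$ has expected training loss equal to $H$ as well, since the two datasets are identically distributed. This pins the boundary case at $q = p$: learning exactly the source distribution drives training loss down to $H$ while simultaneously attaining the optimal generalization $H$.

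The core step is then to show that pushing training loss strictly below $H$ forces $q \ne p$. Since $q = p$ yields expected training loss exactly $H$, any model whose training loss lies below $H$ must differ from $p$ on the training distribution, fitting the empirical sample $\hat{p}_{\mathrm{tr}}\ne p$ rather than $p$ itself. I would make this precise by noting that empirical cross-entropy is minimized by $q = \hat{p}_{\mathrm{tr}}$, so sub-entropy training loss is attainable only by moving $q$ toward $\hat{p}_{\mathrm{tr}}$ and thereby away from $p$, giving $\mathbb{E}_x\!\left[D_{\mathrm{KL}}(p\,\|\,q)\right] > 0$. Substituting back into \eqref{eqgibbs} yields test loss strictly greater than $H$, worse than the optimum. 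This simultaneously rules out the ``no change'' scenario and establishes that generalization degrades.

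I expect the main obstacle to be this last step: rigorously tying ``training loss below $H$'' to a strictly positive deviation $q \ne p$ that persists on the test distribution. The subtlety is that on a finite sample the empirical cross-entropy of the true model $p$ can itself fluctuate below $H$, so the deterministic implication fails for a single draw; the clean formulation therefore works in the population limit (or in expectation over training draws), where the unique minimizer of cross-entropy is $p$ and sub-$H$ loss is unattainable without fitting sampling noise that, by independence, carries no predictive information about the held-out set. Carefully separating the source entropy $H$ from the empirical training entropy, and arguing that any deviation which lowers empirical loss cannot lower the population loss, is where the real care is required.
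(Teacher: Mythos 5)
Your argument is essentially correct (in its expectation-over-training-draws form, which you yourself identify as the clean formulation), but it takes a genuinely different route from the paper's proof. You apply Gibbs' inequality to the source distribution alone, decomposing expected loss as $H + \mathbb{E}_x\left[D_{\mathrm{KL}}(p(\cdot\mid x)\,\|\,q(\cdot\mid x))\right]$, and lean on the equality condition $q=p$: since $q=p$ has expected training loss exactly $H$, any procedure achieving expected training loss below $H$ must produce $q\neq p$ with positive probability, whence expected test loss is strictly above $H$. The paper instead applies Gibbs' inequality \emph{once, jointly} to $X = S \cup T$ (train together with test), and uses additivity of entropy and of cross-entropy over the independent parts, $H(X)=H(S)+H(T)$ and $H_X(O(X,\theta)) = H_S(O(S,\theta)) + H_T(O(T,\theta))$, to obtain a conservation-style compensation inequality (Equations \ref{eq15}--\ref{eq16}): the deficit of training loss below $H(S)$ lower-bounds, one-for-one, the excess of test loss above $H(T)=H(S)$. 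The trade-off is instructive. The paper's joint argument is quantitative --- every bit of over-compression on the training set is paid for at least bit-for-bit on the test set, which directly supports the monotone reading of ``generalization decreases'' as training loss drops further below entropy --- whereas your per-distribution argument yields only strict suboptimality of the test loss, with no bound tied to how far below $H$ the training loss goes, and your comparison baseline is the ideal $q=p$ rather than the class of models that merely stop at entropy. Conversely, your route is more careful on a point the paper glosses over: at the population level $H_S(O(S,\theta)) < H(S)$ is impossible by Gibbs' inequality, so ``loss below entropy'' is inherently a finite-sample phenomenon, and your explicit separation of the source entropy $H$ from the fluctuating empirical training cross-entropy is a rigor the paper's proof lacks (it treats sample losses and distributional entropies interchangeably when it posits $H_S(O(S,\theta)) - H(S) < 0$). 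Note also that the paper's finite-dataset \emph{Remark} (fixed total bit budget $q$, so $O(s,\theta) < C(s)$ forces $O(t,\theta) > C(t)$) is the informal shadow of its main conservation argument, not of yours; your approach is closer to a bias--variance statement about fitting sampling noise, and both are legitimate proofs of the theorem as stated.
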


    \begin{proof}

    We assume that some random variable $X$ exists and contains independent and identically distributed disjoint subsets, $X_{train}$ and $X_{text}$, where $\{x' \sim X_{train}\} \; \bigcup \; \{x'' \sim X_{test}\} = \{x \sim X\}$ and $\{x' \sim X_{train}\} \; \bigcap \; \{x'' \sim X_{test}\} = \emptyset$. For notational clarity, we substitute $S = X_{train}$ and $T = X_{test}$ and use $H_B(A)$ to denote the cross-entropy between random variables $A$ and $B$, and $H(A, B)$ to denote the joint entropy of $A$ and $B$. 
    
    We assume that the entropy of the random variable is nonzero, $H(X) > 0$, and by the independence of $S, T$ we have $H(X) = H(S, T) = H(S) + H(T)$.  We also assume that model losses for samples drawn from these random variables, $s \sim S, t \sim T$ are cross-entropies on causal predictions, where we denote the cross-entropy of a model's output on an arbitrarily large number of samples of a random variable as $H_S(O(S, \theta)) = \sum_n \sum_i\Bbb L(O(s_{[:i-1]}, \theta), s_i)_n = \sum_n H_s(O(s_n, \theta), s_n) :  s_n \sim S$ and the same for $T$.

    From Gibbs' inequality, we know that $H_X(O(X, \theta)) \geq H(X) \; \forall  X, \theta$. We decompose $X$ according to the independence of its subsets $S, T$ as follows:
    
    \begin{equation}
        \begin{split}
         H_X(O(X, \theta)) = & H_S(O(S, \theta)) + H_T(O(T, \theta)) \geq H(X) = H(S) + H(T) \implies \\
         & H_S(O(S, \theta)) - H(S) + \left( H_T(O(T, \theta)) - H(T)  \right) \geq 0 
         \end{split}
         \label{eq15}
    \end{equation}

    Therefore as $\theta$ is modified such that $H_S(O(S, \theta)) - H(S)$ decreases below $0$, $H_T(O(T, \theta)) - H(T)$ increases so that the Inequality \ref{eq15} holds. We assumed that $H(T)$ is fixed, so necessarily $H_T(O(T, \theta))$ increases as $H_S(O(S, \theta))$ decreases below $H(S)$. 
    
    In fact, due to the identical distribution of $S, T$ we know that $H(S) = H(T)$ and can put a lower bound on the decrease of generalization as shown in Equation \ref{eq16}.

    \begin{equation}
        H_S(O(S, \theta) - H(S) \geq - \left( H_T(O(T, \theta)) - H(S) \right)
        \label{eq16}
    \end{equation}
    
    We conclude by restating Equation \ref{eq16} in words: as it was assumed $H_S(O(S, \theta)) < H(S)$ making the left side of the inequality negative, the difference between the model's cross-entropy loss and the training dataset's entropy is at least as large as the difference between the model's loss on the test set and that entropy.
    
    \end{proof}

    \textit{Remark.}
    For finite datasets, a simple proof of Theorem 1 may be sketched as follows: the entropy of the dataset determines its minimal compression amount (say $q$ bits), such that training a model to compress a subset of that dataset to a higher degree than this minimal amount necessarily results in a decrease in compression of the rest of that dataset so that the total compression remains above the minimal value. More precisely, the number of bits required for $s \in S$, $C(s)$, plus the number of bits required for $t$, $C(t)$ is $C(s) + C(t) = q$. As $q$ is fixed, any model achieving compression $O(s, \theta) < C(s)$ must also yield $O(t, \theta) > C(t)$ or otherwise $O(s, \theta) + O(t, \theta) < C(s) + C(t) = q$, an impossibility. 
    
    It should also be noted that the question of whether minimization of training set loss all the way to the entropy of that dataset is necessary for ideal generalization, meaning that we change $\theta$ such that $H_S(O(S, \theta)) \to H(S) \implies \min_{\theta} H_T(O(T, \theta))$, is not addressed by this approach, and we argue this is fundamentally dataset-dependent. The reason for this is that in the limiting case where $S$ and $T$ grow to infinity, $H_S(O(S, \theta)) \to H(S) \implies H_T(O(T, \theta)) \to H(T)$ and it is indeed necessary for the model to reach the training dataset's entropy to minimize the test set's loss (also to the entropy of the dataset). There is no guarantee that this is the case for finite datasets, however, and it is highly unlikely that in the other extreme case where $|\{s \in S\}| = |\{t \in T\}| = 1$ the implication would hold. Taking $H(S) = H(t) = 0$ with this case means that the model would have to correctly guess the function that determines $S, T$ out of all functions that determine $S$, which may or may not be probable and depends on the space of functions that can approximate $S$ and $T$ as well as the nature of the model $\theta$.

    \subsection{Empirically Superior Generalization with Entropy-Informed Training}

    From Theorem 1 we are guaranteed that training a model to exceed (by which we mean decrease below) a dataset's entropy will result in overfitting, but if the model does not exceed the dataset's entropy then we have no guarantees that overfitting will or will not occur. We are left with the following question: does training a model to approach but not exceed a dataset's entropy value result in greater generalization than if we train the model to reduce its cross-entropy loss without regard to the dataset's intrinsic entropy? Another way of thinking about this is to notice that many current approaches to increasing model generalization (early stopping, dropout, $L^2$ regularization etc.) may prevent the model from exceeding its dataset's entropy almost inadvertently, so is it actually more beneficial to use entropy estimation data while training rather than simply relying on existing generalization-boosting methods? We present an argument that it is better to use entropy information, before showing that empirically this is indeed the case. 
    
    Decomposing entropies per dataset into entropies per conditional token is required to make use of Theorem 1 for causal model training. Now consider what occurs when we train a model to approximate the entropy estimates $e_i$ from another model, which we accomplish by rescaling the loss used for model backpropegation according to Equation \ref{eq17}. The resulting model learns to approximate a function that minimizes the loss in Equation $\ref{eq17}$ for each token $i$ in every sample of $x$, instead of a function that results in total minimization of the total cross-entropy loss (Equation \ref{eq18}) without respect to loss per token. 

    \begin{equation}
        \Bbb L(O(x, \theta), x) = \sum_i || \Bbb L(O(x_{:i-1}, \theta), x_i) - e_i ||_1
        \label{eq17}
    \end{equation}

    \begin{equation}
        \Bbb L(O(x, \theta), x) = \sum_i \Bbb L(O(x_{:i-1}, \theta), x_i)
        \label{eq18}
    \end{equation}

    We argue that training a model $\theta_a$ to approximate the per-token entropies of a model with a known goodness of generalization $\theta_b$ should result in better generalization of $\theta_a$, as long as per-token entropy imparts some generalization characteristics onto this model. To test this, we initialize a model and training dataset combination that is prone to severe overfitting: a model that has more parameters (around 75 million) than training data points (around 50 million tokens) and proceed to train for many epochs. Applying Equation (\ref{eq17}) to this setting, we find substantially better generalization both in terms of lowest hold-out test loss achieved and the stability of this loss in terms of the number of training epochs in which the model approximates this value compared to an identical experimental setting applying Equation (\ref{eq18}), with early stopping or early stopping and dropout ($p=0.1$ applied to the attention values) (Figure \ref{fig7}). We note that the use of entropy estimates is composable with other regularizers such as dropout, which combined yield the greatest generalization we measure (Figure \ref{fig7}, Table \ref{table3}). All models use weight decay intrinsic to the AdamW optimizer, and we do not test $L^2$ weight regularization due to its relatively ineffective application for adaptive optimizers \citep{loshchilov2019decoupledweightdecayregularization}.

    \begin{figure}[h]
        \centering
        \includegraphics[width=0.99\textwidth]{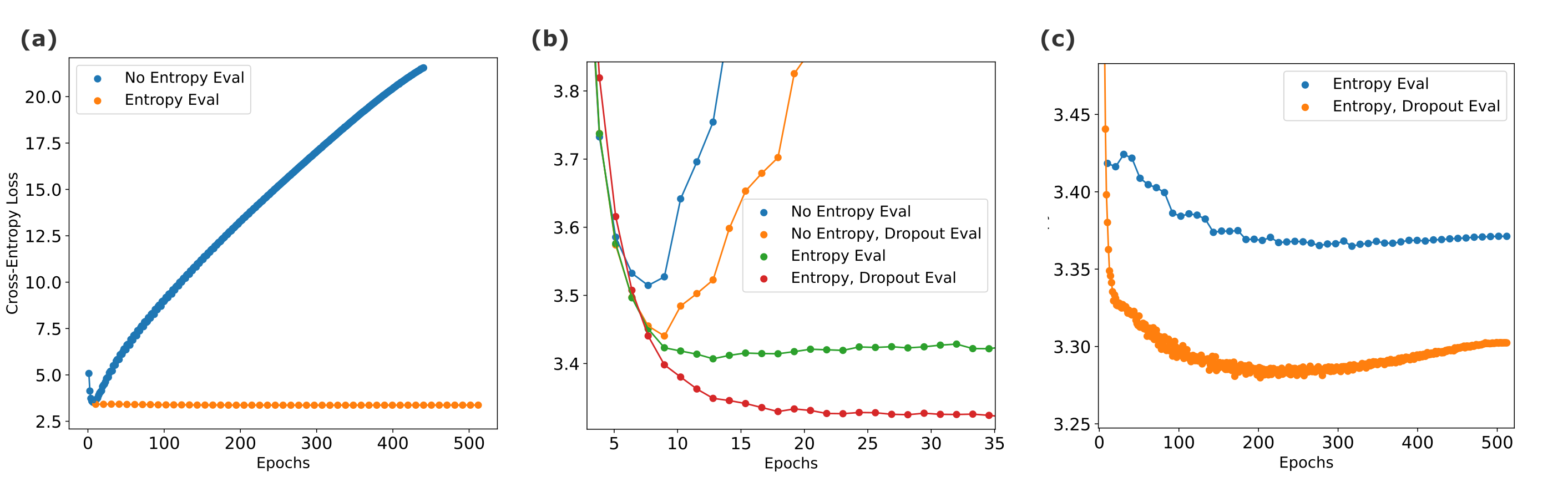}
        \caption{Single model per-token entropy calculation and correlation with CLM token entropies. All models are the same architecture used for transformer-based causal modeling in other figures ($d_m = 512, n_l=16, n_{ctx}=1024$) and are trained on the first 50k samples on the FineWeb-edu dataset, and losses are computed on evaluation (hold-out) data. (a) Unbounded overfitting for transformers trained on the small dataset without but not with loss rescaling according to Equation \ref{eq17}. (b) and (c) Overfitting evaluation with and without $p=0.1$ dropout in attention, with and without entropy estimation loss rescaling.}
        \label{fig7}
    \end{figure}

    It should be noted that the same argument can extended to the use of many entropy estimates rather than only one. What we are in effect attempting to do is to introduce entropy information in order to better approximate our model's trajectory of parameter states during training $\theta_0, \theta_1, ..., \theta_N$ with another model's trajectory $\phi_0, \phi_1, ..., \phi_N$, where we know that $\phi_N$ generalizes well (and therefore that $\phi_1, \phi_2, ..., \phi_{N-1}$ do too for most models). Training $\theta$ to match many sequential entropy estimates of $\phi$ rather than only one is a natural way to more closely align these two trajectories, but we leave this to future work.

    \begin{center}
    \begin{table}[H]
    \begin{center}
    \renewcommand{\arraystretch}{1.2}
    \begin{tabular}{||l c c c c c||} 
     \hline
      Model & No Entropy & No Entropy, Dropout & Entropy & Entropy, Dropout &\\ 
     \hline
      Transformer CLM & 3.515 & 3.440 & 3.364 & 3.280 & \\
     \hline
     \hline 
    \end{tabular}
    \end{center}
    \vspace{0.1cm}
    \caption{Evaluation cross-entropy loss for small-subset (50k sample) FineWeb, all with early stopping.}
    \label{table3}
    \end{table}
    \end{center}

\section{Conclusion}

    \subsection{Limitations of this work}

    One of the main limitations of the encoder-decoder entropy estimation model is that it requires $n_{ctx}$ times the compute for entropy estimation as a strictly causal model. Furthermore, the entropy estimates these models yield are inaccurate for tokens near the start of sequences: we find that the first 100 or so tokens are poorly estimated using our $n_{ctx}=1024$ model. We note that neither of these limitations are insurmountable: firstly, because models trained with smaller context windows would be able to provide accurate per-token entropy estimates for tokens near the start of a sequence, and in the second case because we show empirically that second order entropy estimation models are efficiently trainable.

    Limited in compute, this work does not directly test some hypotheses put forward and resorts in some cases to observing the limiting behaviors of smaller experiments to infer what would happen for larger ones. Whether these approaches scale as effectively as they are predicted to do so for very large compute is an open question, one that we hope will be picked up by the field and brought to a more satisfactory conclusion in the future.

    \subsection{Generalization and Entropy}

    Methods for increasing the generalization of machine learning model training abound, but most approaches may be thought of as a conversion of unconstrained optimization to a constrained optimization problem. To illustrate: optimization with $L^2$ weight decay constrains the norms of the weight `vectors', early stopping constrains the norm of the difference in the weight of the final compared to the initial model state (assuming a fixed or decaying update size), and dropout constrains the combinatorial paths by which information can pass through a model. The training of models using entropy estimates is a departure from these methods: training a model to match an entropy estimate is not strictly speaking constrained optimization as there are no priors placed on the model or optimizer, rather instead the objective function itself is re-parametrized according to both the model's current loss and the entropy estimates. 

    It may be wondered whether having an efficiently trainable entropy estimator would be useful if compute were to become one day so plentiful that a large model could be trained on all text data in the world, regardless of the likelihood of this scenario. We argue that it is: entropy estimation models may be applied to derived text datasets (proofs etc) as well as other data modalities (audio, visual) to gain more accurate entropy estimates even in this scenario, even if models themselves converge on one representation \citep{huh2024platonicrepresentationhypothesis}. 

    \subsection{Entropy as an Optimization Rate Limiter}

    Per-token entropy estimates may also be useful to accelerate the training convergence of causal language models. The idea that noisy input information should be somehow filtered during gradient-based optimization procedures is not new, and in the context of language modeling, data filtering most often occurs at the level of the source document.  It is not inaccurate to view the process of training a causal model on entropy-labeled data as a form of data filtering, but in this case the filtering occurs at the level of the token rather than the document. Concepts of information filtering have found their way into model architectures and optimizers themselves: attention mechanism ubiquitous in transformers today was originally introduced in order to effectively filter information from many tokens \citep{bahdanau2016neuralmachinetranslationjointly}, and the most commonly used optimizers today (Adam and AdamW) for language modeling were in a large part designed to deal with stochastic input data by using momentum estimates \citep{kingma2017}. Entropy estimation-based training can also be viewed as performing an analogous noise-filtering method that Adam optimizers do at the parameter level (using second adaptive momentum) but at the level of the token. We leave the investigation of entropy and optimization efficiency to future work.

\bibliographystyle{unsrtnat}
\bibliography{references}  
\beginsupplement{}
%     \setcounter{section}{0}
%     \renewcommand{\thesection}{S\arabic{section}}
%     \setcounter{equation}{0}
%     \renewcommand{\theequation}{S\arabic{equation}}
%     \setcounter{table}{0}
%     \renewcommand{\thetable}{S\arabic{table}}
%     \setcounter{figure}{0}
%     \renewcommand{\thefigure}{S\arabic{figure}}
%   \newcounter{offset}
%   \setcounter{offset}{\value{figure}}
%   \renewcommand{\thefigure}{S\the\numexpr\value{figure}-\value{offset}\relax}
% }
\section{Appendix}

    \subsection{Experimental Details}

    Training details are found in \ref{autosection}, and we provide more here. We construct models using Pytorch \citep{paszke2019pytorchimperativestylehighperformance}, train models using (modified) Hugging Face Transformers \citep{wolf-etal-2020-transformers} trainers with accelerate \citep{accelerate} integration. Most models were trained using Distributed Data Parallel with Pytorch-native automated fp16/fp32 mixed-precision (AMP) which stores weights and parameters in 32-bit precision and downcasts during matrix multiplication, although some were trained using bf16/fp32 mixed precision. A very small number of models were trained using FSDP \citep{zhao2023pytorchfsdpexperiencesscaling}, but training throughputs using this approach were not compared to DDP. We use Datasets \citep{lhoest-etal-2021-datasets} to store, load, and augment data, and typically train using pre-tokenized and pre-processed (padded, entropy estimated etc.) datasets for speed. We train using AdamW with a 500-step warmp-up, a maximum learning rate of $\eta=2 * 10^{-4}$ for transformers and $\eta=5*10^{-4}$ for mixers, with linear learning rate decay to 0 over the full training run (200k steps, or rarely 500k steps). For $n_{ctx}=512$ experiments, the batch size is $b=128$ and for $n_{ctx}=1024$ we use $b=64$. 

   \begin{figure}[h]
        \centering
        \includegraphics[width=0.95\textwidth]{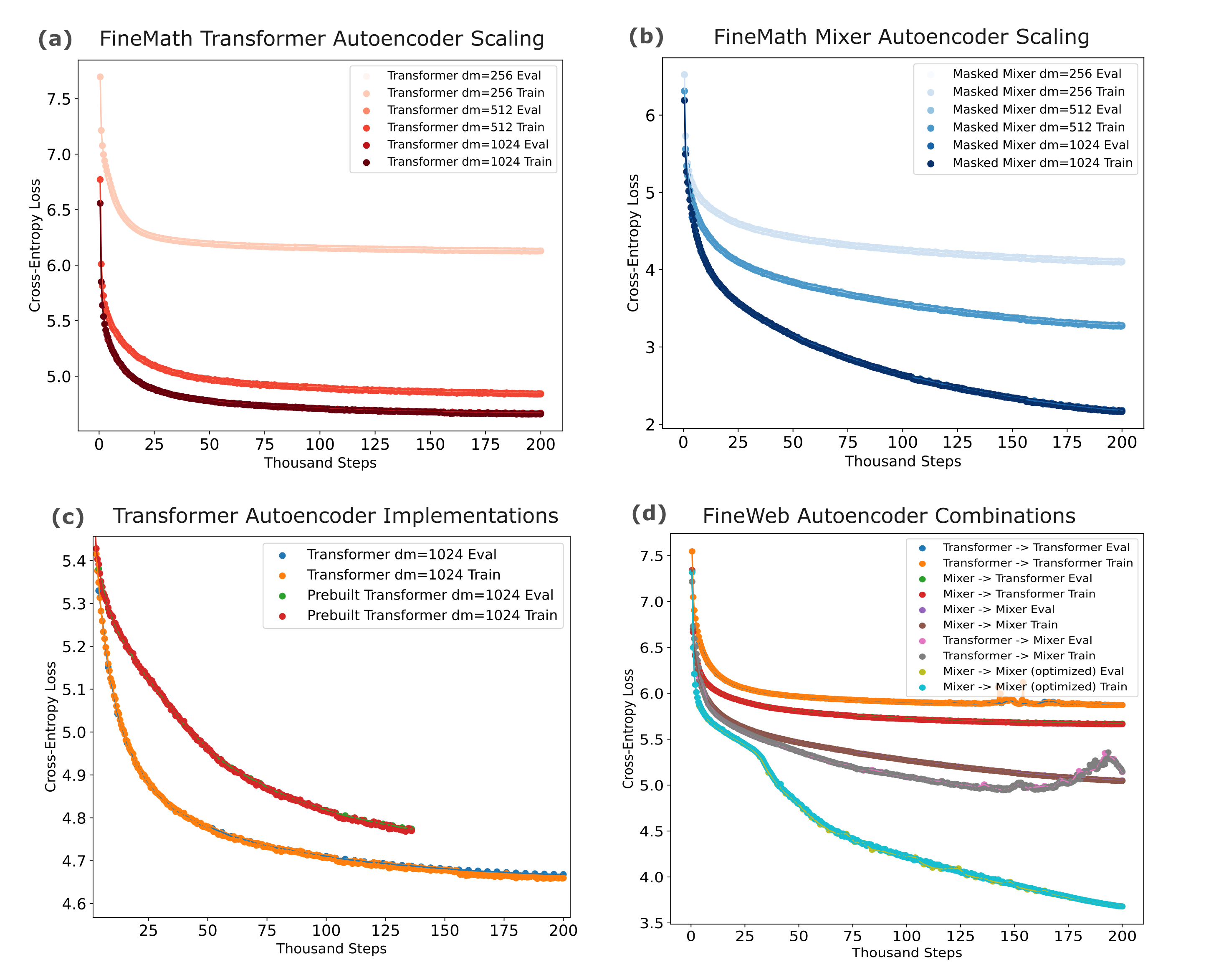}
        \caption{Repeated embedding autoencoder training characterization. (a) Transformers and (b) Masked Mixer-based autoencoder training curves trained on FineMath 4+, note the difference in loss scaling. (c) Llama transformer module-based autoencoders (with independent embeddings) slightly outperform autoencoders implemented with prebuilt causal Llama models, ruling out implementation errors. (d) Autoencoders with transformer decoders but not encoders are inefficient to train.}
        \label{figs1}
    \end{figure}

    \subsection{Why Transformers Struggle with Repeated Embeddings} \label{sections1}

    We reasoned that the nature of the transformer decoder's self-attention transformations, we consider the formal definition of this transformation as shown in \ref{eq22}, where the $Q, K, V$ are matrices of packed vectors projected from each token embedding.

    \begin{equation}
    A(Q, K, V) = \mathrm{softmax} \left( \frac{QK^T}{\sqrt(d_k)} \right)V
    \label{eq22}
    \end{equation}
    
    By definition, introduction of repeat embeddings means that inputs at each token position are identical. Considering the self-attention operations with respect to token indices, we can ignore the $d_k$ scaling factor and express Equation \ref{eq22} as Equation \ref{eq23}, which gives the attention value between the query projection for token $i$ and key and value projections at index $j$, for ${i, j \in n}$. 
    
    \begin{equation}
    A(q_i, k_j, v_j) = \frac{\exp \left( (q_i \cdot k_j) v_j \right)}{ \sum_n \exp \left( (q_i \cdot k_n) v_n \right)}
    \label{eq23}
    \end{equation}
    
    With repeated embedding introduction, and because as the projection weight matrices $W_k, W_q, W_v$ are identical for all tokens, the we have the following:
    $k_i = k_j, \; q_i = q_j, \; v_i = v_j \forall i, j$ and therefore $q_i \cdot k_j = q_i \cdot k_l \; \; \forall i, j, l$ such that $A(q_i, k_j, v_j) = A(q_i, k_l, v_l) \; \forall i, j, l$ and necessarily that output activations from the attention layer are identical across all token embeddings.  As norms and feedforward layers in the transformer are identical between token indices, outputs will be identical as well.
    
    This provides a clear reason as to why a transformer decoder would be incapable of generating language from a repeated embedding, but omits an integral part of our transformer architecture: Llama-style transformers apply positional encoding (Rotary Positional Encoding \citep{su2023roformerenhancedtransformerrotary} for our models) before each self-attention operation such that the embeddings at each position are actually unique, assuming that the positional encoding is itself unique for the token indices used. Thus is is not strictly correct to point to identical activations due to self-attention as being the cause of the poor transformer training for repeat-embedding autoencoders, but we conjectured that positional encoding might not impart sufficient uniqueness to each embedding for the transformer decoder to adequately make use of the embedding's informational content. 

    \begin{figure}[h]
        \centering
        \includegraphics[width=0.95\textwidth]{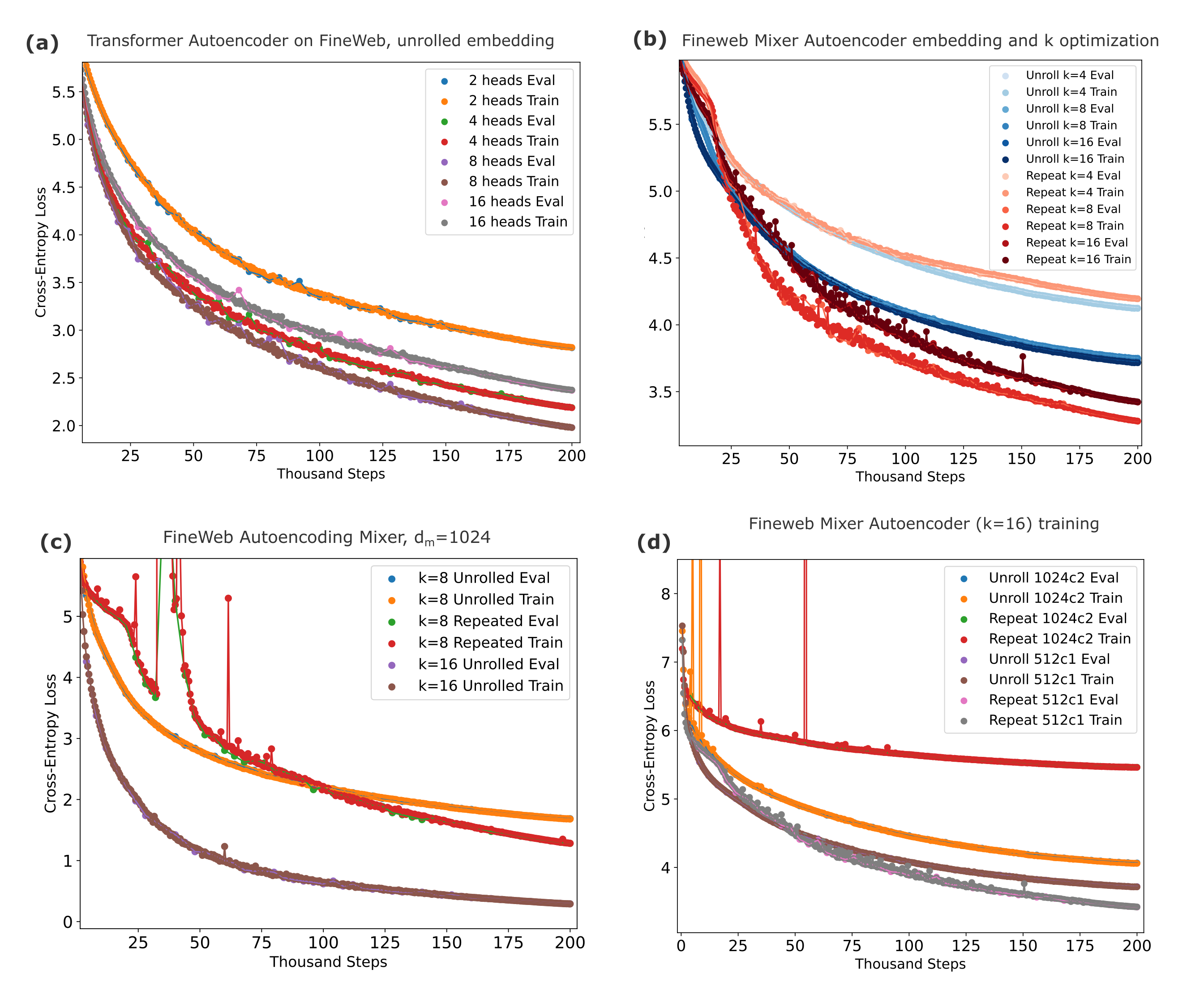}
        \caption{Autoencoder training efficiency characterization with architectural variations. All models are $d_m=512, n_l=16, n_{ctx}=512$ unless otherwise noted. For $d_m=1024$ models, unrolling is performed middle-out rather than first-forward, where Equation \ref{eq6} is substituted for $E(x) = W(x_{m :m + s} \circ x_{0: \;\max (0, \; m + s - d_m/2)}) + \beta$. 200,000 training steps corresponds to approximately 13 billion tokens.}
        \label{figs2}
    \end{figure}

    \begin{figure}[h]
        \centering
        \includegraphics[width=0.95\textwidth]{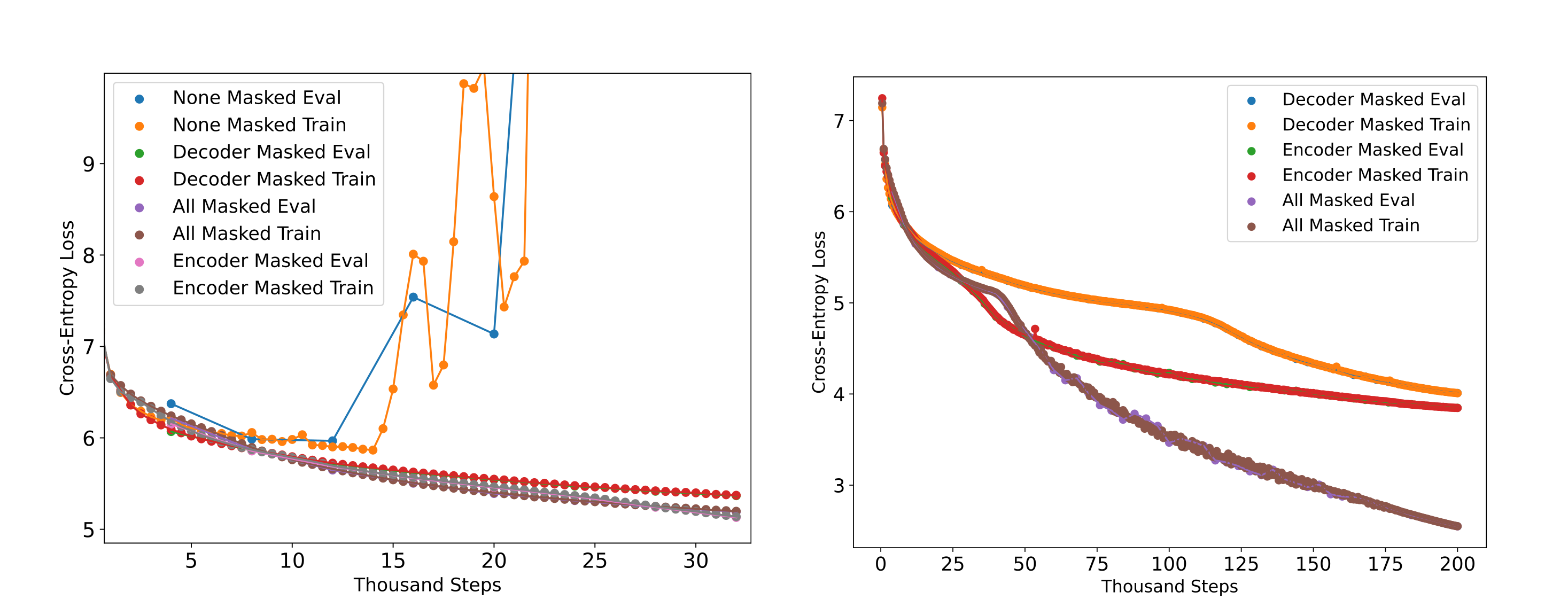}
        \caption{Causal masking is required for stable mixer-based autoencoder training, and masking (training performed on Fineweb).}
        \label{figs3}
    \end{figure}

    \begin{figure}[h]
        \centering
        \includegraphics[width=0.95\textwidth]{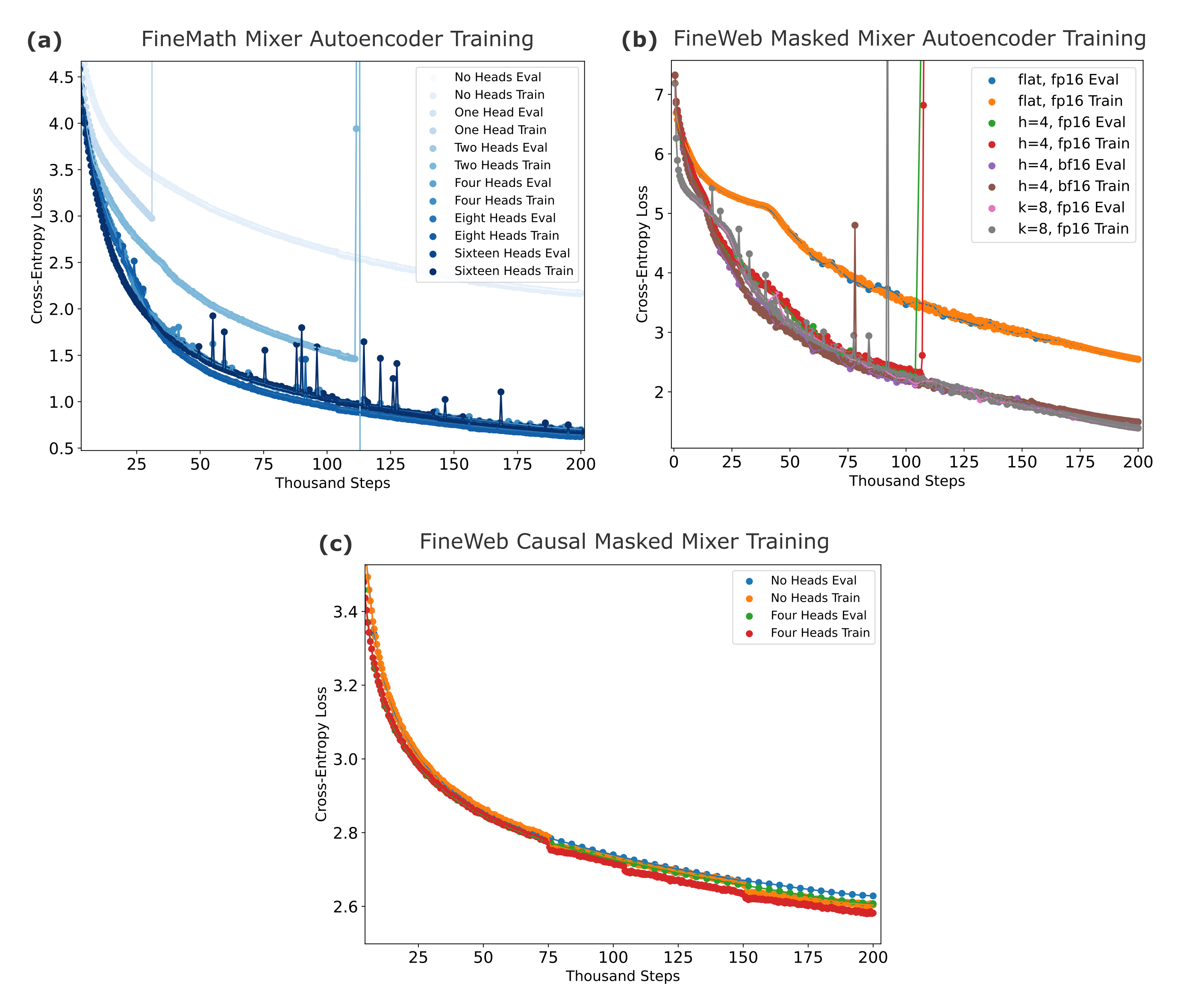}
        \caption{Increasing the head number or kernel size results in substantial improvements in autoencoder but not causal model training efficiency. Models are $d_m=1024, n_l=16$. For multi-headed mixers, the number of parameters per layer is $p = n_h * n_{ctx}^2 + 2d_m^2$ for $n_h$ heads per layer with each head dimension being $d_m / n_h$. For multi-kernel mixers, each layer contains $p=k * n_{ctx}^2$ parameters such that for the default case $k=1$ without heads we have $p=n_{ctx}^2$}
        \label{figs4}
    \end{figure}

    \begin{figure}[h]
        \centering
        \includegraphics[width=0.95\textwidth]{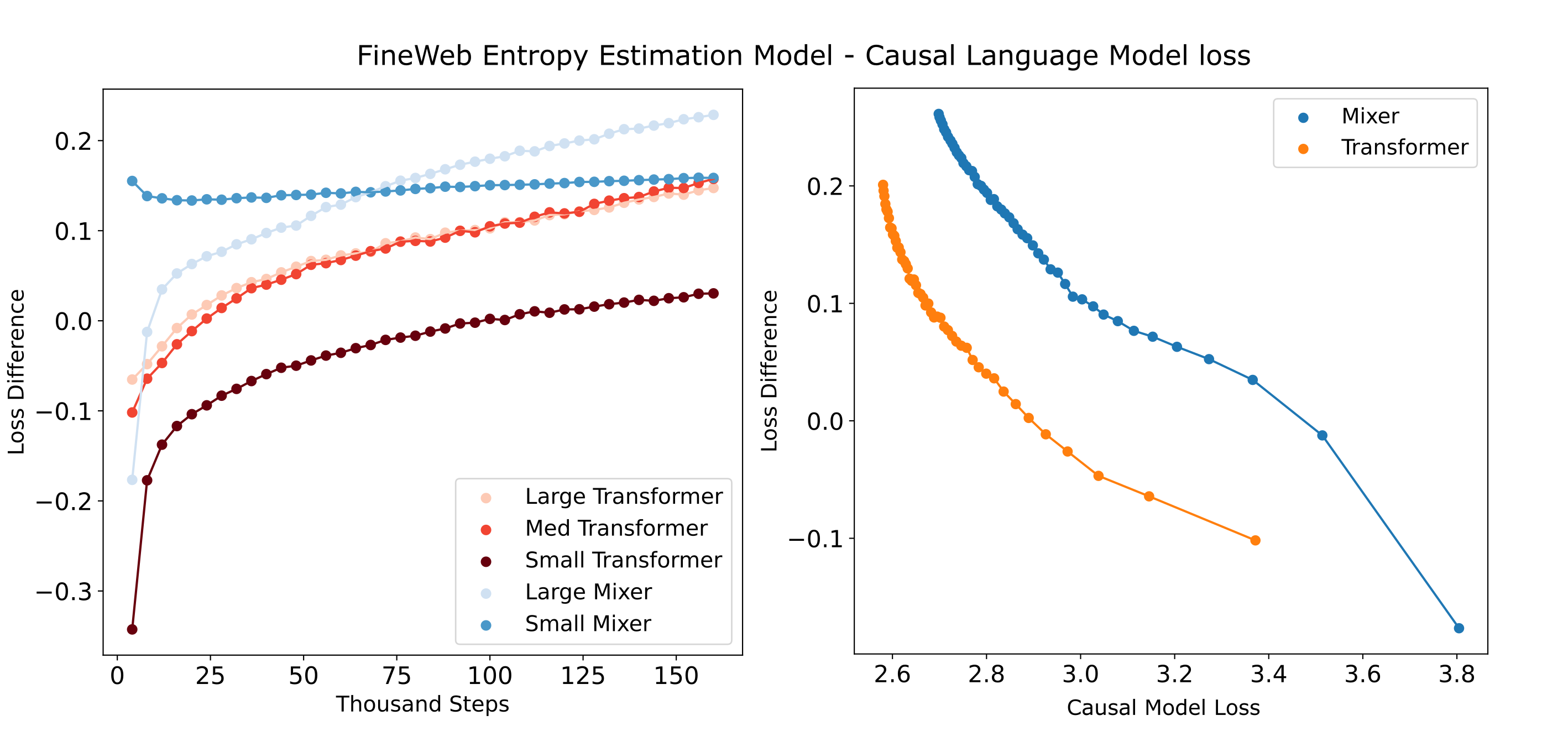}
        \caption{Entropy estimation model size scaling (all with $d_e=64$). Left, loss difference (clm - eem) per training step (over 13.1 billion tokens). Right, loss difference per causal language model loss value (note the exponential growth) for medium transformer and large mixer. For transformers: small models are $d_e=128$ encoder dimension and $d_d = 256$ decoder dimension with $n_l=8$ layers each, the medium transformer is $d_e=256, d_d=512, n_l=16$ and large $d_e = 512, d_m=1024, n_l=24$. Small mixer is $d_e=128, d_d=512, n_l=8$ and large $d_e=256, d_d=1024, n_l=16$.}
        \label{figs5}
    \end{figure}

    \begin{figure}[h]
        \centering
        \includegraphics[width=0.8\textwidth]{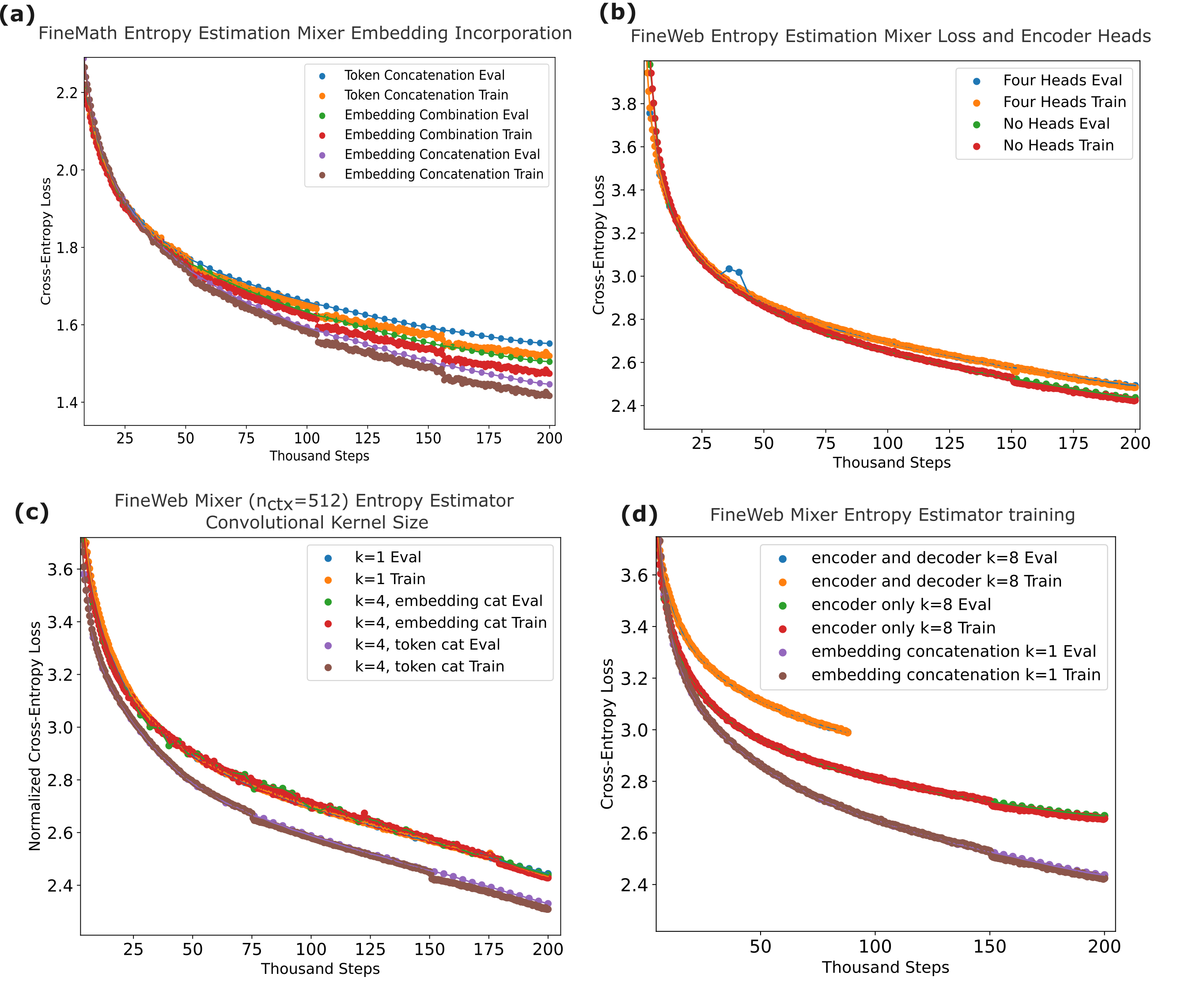}
        \caption{Head and kernel optimizations that helped large autoencoders don't increase entropy estimation model training efficiency. 200k steps are approximately 13 billion tokens.}
        \label{figs6}
    \end{figure}

    \begin{figure}[h]
        \centering
        \includegraphics[width=0.9\textwidth]{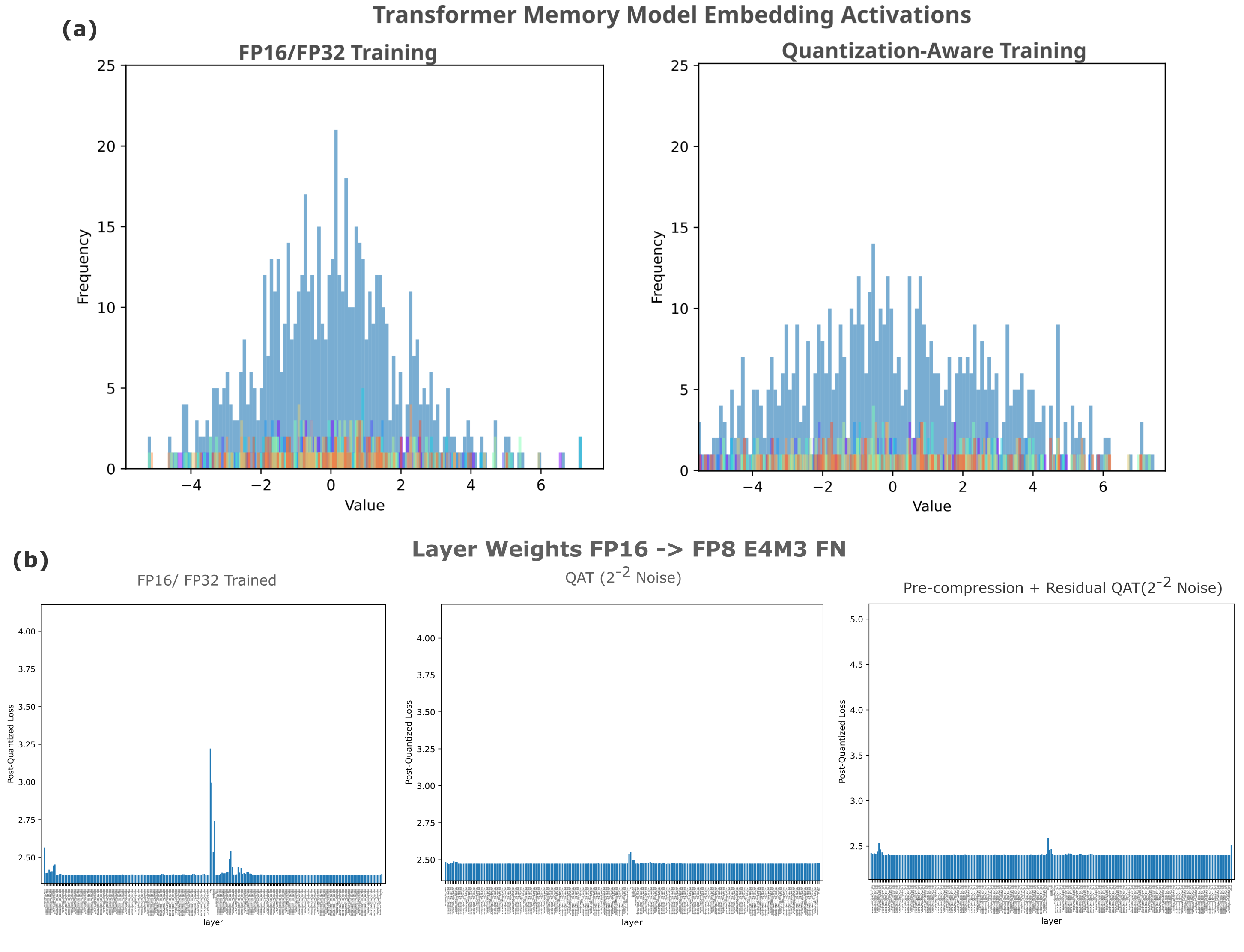}
        \caption{Quantization-aware training characterizations. (a) Activation distributions (colors indicate one embedding, blue the aggregate of many embeddings) with and without $2^{-2}$ noise addition. (b) Weights are also trained to become quantization-insensitive upon injection of activation noise.}
        \label{figs7}
    \end{figure}

   \subsection{Proxy methods for entropy estimation} \label{proxysection}

   We tested whether attribution methods would allow for an approximation of entropy estimations, and in particular whether one-pass attributions would allow one to forego the $n_{ctx}$ forward passes for each sample. We investigated attribution via masking of the embedding and measuring the change in output upon doing so, inspired by occlusion-based attribution where input elements are masked to judge their contributions to the output \citep{ancona2018betterunderstandinggradientbasedattribution}. We calculate occlusion as shown in Equation \ref{eq20}, where $W_{wte}$ is the decoder's word token embedding transformation, not the encoder's, and $\circ$ signifies concatenation (in this case in the sequence dimension), $\mathbf{0}$ the zero vector, $\theta_e$ the encoder model, and $\theta_d$ the decoder. In addition to occluding the memory input, we apply an attention mask to that input as well for transformer models.
    
    \begin{equation}
    \begin{split}
    &x = O(x, \theta_e) \circ W_{wte}x \\
    &x_o = \mathbf{0} \circ W_{wte}x \\
    &Attr(x) = m(O(x, \theta_d), O(x_o, \theta_d))
    \label{eq20}
    \end{split}
    \end{equation}
    
    We investigate two metrics $m$ used to determine attributions: the $L^1$ norm given in Equation (\ref{eq21}) and the cosine similarity distance given in Equation (\ref{eq40}). For Equation (\ref{eq21}) where $i$ is indexed in the embedding dimension. Here we actually use the logit activations rather than the embeddings, so effectively $m_{l^1}$ measures the Manhattan metric between the decoder's logits with versus without the encoder's embedding. In both metrics for transformers we can also remove the embedding information using an attention mask. For Cosine similarity, we compute the complement of this value as shown in Equation \ref{eq40} where $j$ iterates on the sequence dimension, and $\mathrm{max}, \mathrm{min}$ are computed on this dimension as well. We mask all pad input elements during this normalization process, such that these are assigned infinite values for minimum computation and zero values for maximum computation (the norms of $y$ values from trained models are usually >10000, and none were observed to have zero distance in part due to their high dimensionality).
    
    \begin{equation}
    m_{l^1}(O(x, \theta_d), O(x_o, \theta_d)) = || O(x, \theta_d) - O(x_o, \theta_d) ||_1 = \sum_i | O_i(x, \theta_d) - O_i(x_o, \theta_d) | 
    \label{eq21}
    \end{equation}

    \begin{equation}
    m_{cosine}(O(x, \theta_d), O(x_o, \theta_d)) =  1 -  \frac{O(x, \theta_d) \cdot O(x_o, \theta_d)}{|| O(x, \theta_d) || \; ||  O(x_o, \theta_d) ||}
    \label{eq40}
    \end{equation}
    
    $L^1$ norms are sensitive to changes in scale between samples, which can be a problem as gradient descent is normally calculated batchwise such that scale inequalities between samples in a batch lead to biases in gradient magnitude once weights are applied. To normalize all token attributions to take values in $[0, 1]$, we use a simple per-sample linear minmax approach as shown in (\ref{eq41}). Cosine similarity has the advantage of not needing to be normalized, as the range is $m_{cosine}(O(x, \theta_d), O(x_o, \theta_d)) \in [0, 2]$ with nearly all values in $[0, 1]$ for sufficiently high-dimensional output vectors. Correlations obtained between these attribution methods and other entropy estimation methods are given in Table \ref{tables1}.

    \begin{equation}
    N_{minmax}(y) = \frac{y_j - \mathrm{min} \; y}{\mathrm{max}\; y - \mathrm{min} \; y }
    \label{eq41}
    \end{equation}

    \begin{figure}[h]
        \centering
        \includegraphics[width=0.99\textwidth]{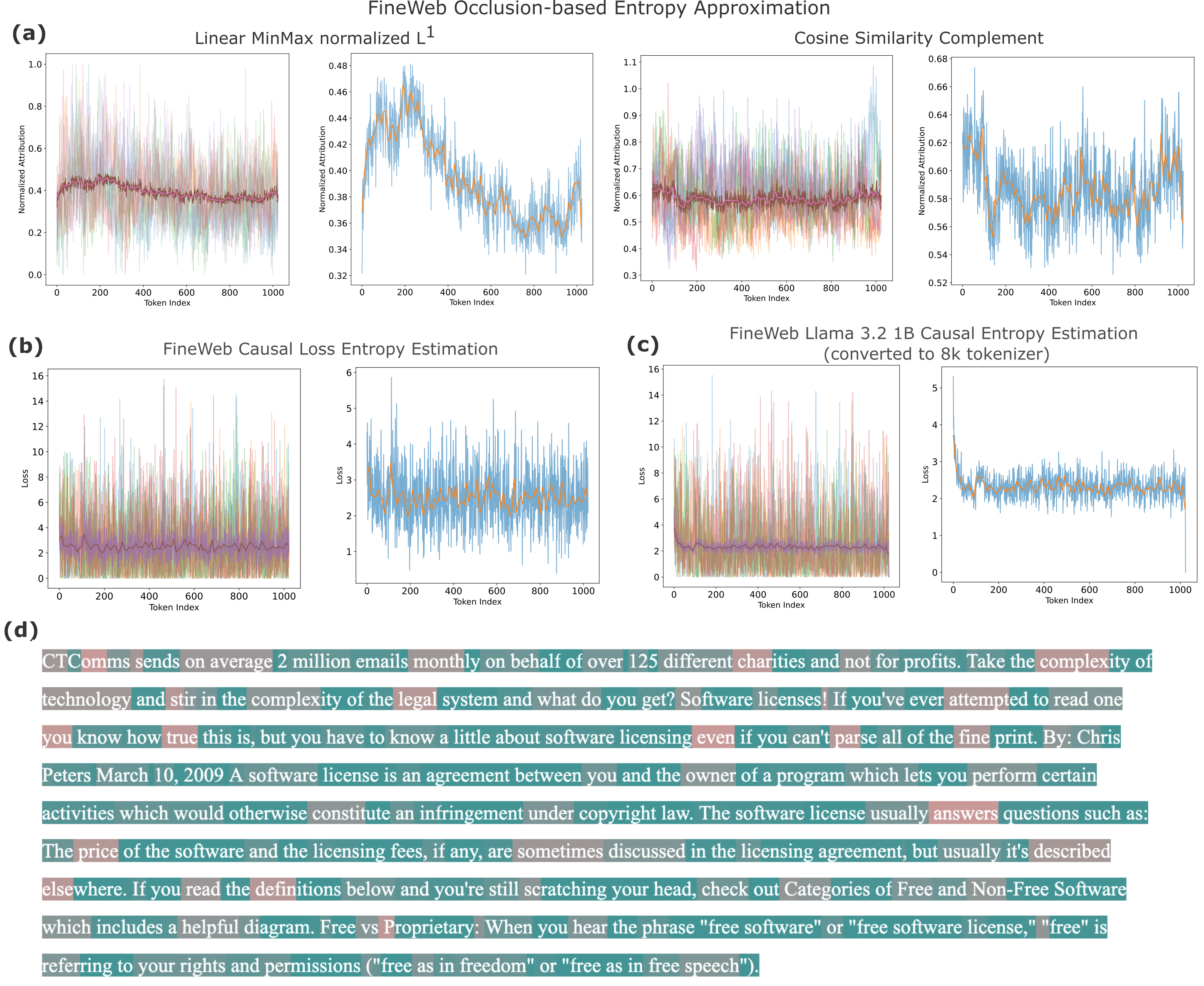}
        \caption{Per-token entropy estimation displays some statistical features that are aligned with prior expectations: higher entropy at the start of documents, and higher entropy at the start of words for words tokenized by more than one token. (a) Average occlusion-based entropy per token index. (b) Average Causal language model entropy per token index. (c) Llama 3.1 (1b) -derived entropy per token index, after conversion to the 8k tokenizer. (d) Causal language model-derived entropies per token of a FineWeb document excerpt, with red denoting higher and green denoting lower entropy.}
        \label{figs8}
    \end{figure}

    \begin{center}
    \begin{table}[H]
    \begin{center}
    \renewcommand{\arraystretch}{1.2}
    \begin{tabular}{||l c c c c ||} 
     \hline
      y vs x & m & x & b &\\ [0.5ex] 
     \hline\hline
      $L^1$ occlusion, cosine occlusion & 0.9566 & 0.1431 &  0.4195 &\\
     \hline
      $L^1$ occlusion, CLM loss & 0.0063 & 0.3894 &0.0107 & \\ 
     \hline
     Large embedding $L^1$ occlusion, CLM loss & 0.0172 & 0.4152 & 0.0688 & \\
     \hline 
     Large embedding $L^1$ occlusion, $L^1$ loss & 0.2308 & 0.3691 & 0.0424 & \\
     \hline
     Very large embedding $L^1$ occlusion, $L^1$ loss & 0.2680 & 0.3629 & 0.0574 & \\
     \hline
    \end{tabular}
    \end{center}
    \vspace{0.1cm}
    \caption{Correlations between entropy estimations and proxy estimation methods per token.}
    \label{tables1}
    \end{table}
    \end{center}

\end{document}